\pgfplotsset{compat = newest}
\theoremstyle{plain}
\newtheorem{theorem}{Theorem}[section]
\newtheorem{proposition}[theorem]{Proposition}
\newtheorem{lemma}[theorem]{Lemma}
\newtheorem{corollary}[theorem]{Corollary}
\theoremstyle{definition}
\newtheorem{definition}[theorem]{Definition}
\theoremstyle{remark}
\newcommand{\alink}[1]{\href{#1}{paper-link}}
\definecolor{citecolor}{HTML}{0071BC}
\definecolor{linkcolor}{HTML}{ED1C24}
\definecolor{commentcolor}{RGB}{110,154,155}   
\def\eqref#1{equation~\ref{#1}}
\def\1{\bm{1}}
\DeclareMathAlphabet{\mathsfit}{\encodingdefault}{\sfdefault}{m}{sl}
\SetMathAlphabet{\mathsfit}{bold}{\encodingdefault}{\sfdefault}{bx}{n}
\icmltitlerunning{Information Flow in Self-Supervised Learning}
\begin{document}

\twocolumn[
\icmltitle{Information Flow in Self-Supervised Learning}




\begin{icmlauthorlist}

\icmlauthor{Zhiquan Tan}{thumath}
\icmlauthor{Jingqin Yang}{iiis}
\icmlauthor{Weiran Huang}{sjtu,shailab}
\icmlauthor{Yang Yuan}{iiis,shailab,qizhi}
\icmlauthor{Yifan Zhang}{iiis}
\end{icmlauthorlist}

\icmlaffiliation{iiis}{IIIS, Tsinghua University, Beijing, China}
\icmlaffiliation{thumath}{Department of Mathematical Sciences, Tsinghua University, Beijing, China}
\icmlaffiliation{shailab}{Shanghai AI Laboratory, Shanghai, China}
\icmlaffiliation{qizhi}{Shanghai Qizhi Institute, Shanghai, China}
\icmlaffiliation{sjtu}{MIFA Lab, Qing Yuan Research Institute, SEIEE, Shanghai Jiao Tong University, Shanghai, China}

\icmlcorrespondingauthor{Yang Yuan}{yuanyang@tsinghua.edu.cn}
\icmlcorrespondingauthor{Yifan Zhang}{zhangyif21@mails.tsinghua.edu.cn}
\icmlkeywords{Machine Learning, ICML}

\vskip 0.3in
]



\printAffiliationsAndNotice{}  

\begin{abstract}
In this paper, we conduct a comprehensive analysis of two dual-branch (Siamese architecture) self-supervised learning approaches, namely Barlow Twins and spectral contrastive learning, through the lens of matrix mutual information. We prove that the loss functions of these methods implicitly optimize both matrix mutual information and matrix joint entropy. This insight prompts us to further explore the category of single-branch algorithms, specifically MAE and U-MAE, 
for which mutual information and joint entropy become the entropy. 
Building on this intuition, we introduce the Matrix Variational Masked Auto-Encoder (M-MAE), a novel method that leverages the matrix-based estimation of entropy as a regularizer and subsumes U-MAE as a special case. 
The empirical evaluations underscore the effectiveness of M-MAE compared with the state-of-the-art methods, including a $3.9\%$ improvement in linear probing ViT-Base, and a $1\%$ improvement in fine-tuning ViT-Large, both on ImageNet. 
\end{abstract}
\section{Introduction}

Self-supervised learning (SSL) has demonstrated remarkable advancements across various tasks, including image classification and segmentation, often surpassing the performance of supervised learning approaches ~\citep{chen2020simple, caron2021emerging, li2021self, zbontar2021barlow, bardes2021vicreg}. Broadly, SSL methods can be categorized into three types: contrastive learning, feature decorrelation-based learning, and masked image modeling.

One prominent approach in contrastive self-supervised learning is SimCLR~\citep{chen2020simple}, which employs the InfoNCE loss~\citep{oord2018representation} to facilitate the learning process. Interestingly, \citet{oord2018representation} show that InfoNCE loss can serve as a surrogate loss for the mutual information between two augmented views. 
Unlike contrastive learning which needs to include large amounts of negative samples to ``contrast'', another line of work usually operates without explicitly contrasting with negative samples which are usually called feature decorrelation-based learning \citep{garrido2022duality}, e.g.,  BYOL~\citep{grill2020bootstrap}, SimSiam~\citep{chen2021exploring}, Barlow Twins~\citep{zbontar2021barlow}, VICReg~\citep{bardes2021vicreg}, etc. These methods have garnered attention from researchers seeking to explore alternative avenues for SSL beyond contrastive approaches.

On a different path, the masked autoencoder (MAE)~\citep{he2022masked} introduces a different way to tackle self-supervised learning. Unlike contrastive and feature decorrelation-based methods that learn useful representations by exploiting the invariance between augmented views, MAE employs a masking strategy to have the model deduce the masked patches from visible patches. Therefore, the representation of MAE carries valuable information for downstream tasks.

At first glance, 
these three types of self-supervised learning methods may seem distinct, but researchers have made progress in understanding their connections. \citet{garrido2022duality} establish a duality between contrastive and feature decorrelation-based methods, shedding light on their fundamental connections and complementarity. Additionally, \citet{balestriero2022contrastive} unveil the links between popular feature decorrelation-based SSL methods and dimension reduction methods commonly employed in traditional unsupervised learning. These findings contribute to our understanding of the theoretical underpinnings and potential applications of feature decorrelation-based SSL techniques. However, compared to connections between contrastive and feature decorrelation-based methods, the relationship between MAE and contrastive or feature decorrelation-based methods remains largely unknown. To the best of our knowledge, \cite{zhang2022mask} is the only paper that relates MAE to the alignment term in contrastive learning. 

Though progress has been made in understanding the existing self-supervised learning methods, the tools used in the literature are diverse. As contrastive and feature decorrelation-based learning usually use two augmented views of the same image, one prominent approach is analyzing the mutual information between two views~\citep{oord2018representation, shwartz2023information, shwartz2023compress}. A unified toolbox to understand and improve self-supervised methods is needed. Recently, \cite{bach2022information, skean2023dime} have considered generalizing the traditional information-theoretic quantities to the matrix regime. Interestingly, we find these quantities can be powerful tools in understanding and improving existing self-supervised methods regardless of whether they are contrastive, feature decorrelation-based, or masking-based \citep{he2022masked}. 

Taking the matrix information theoretic perspective, we analyze some prominent contrastive and feature decorrelation-based losses and prove that both Barlow Twins and spectral contrastive learning \citep{haochen2021provable} are maximizing mutual information and joint entropy, see Theorem~\ref{MI max 1} and Theorem~\ref{thm:joint-entropy 1}. These claims are crucial for analyzing contrastive and feature decorrelation-based methods, offering a cohesive and elegant understanding. More interestingly, the same analytical framework extends to MAE as well, wherein the concepts of mutual information and joint entropy gracefully degenerate to entropy. Propelled by this observation, we augment the MAE loss with a matrix-based estimation of entropy, giving rise to our new method, Matrix variational Masked Auto-Encoder (M-MAE), which subsumes U-MAE as a special case, see Theorem~\ref{thm:subsume}.

Empirically, M-MAE stands out with commendable performance. Specifically, it has achieved a $3.9\%$ improvement in linear probing ViT-Base, and a $1\%$ improvement in fine-tuning ViT-Large, both on ImageNet. This empirical result not only underscores the efficacy of M-MAE but also accentuates the potential of matrix information theory in ushering advancements in self-supervised learning paradigms.

In summary, our contributions can be listed as follows:
\begin{itemize}
    \item We use matrix information-theoretic tools like matrix mutual information and joint entropies to understand existing contrastive and feature decorrelation-based self-supervised methods.
    \item We introduce a novel method, M-MAE, which is rooted in matrix information theory, and subsumes U-MAE as a special case. 
    \item Our proposed M-MAE has demonstrated remarkable empirical performance, showcasing a notable improvement in self-supervised learning benchmarks. 
\end{itemize}

\section{Related Work}

\paragraph{Self-supervised learning.}

Contrastive and feature decorrelation based methods have emerged as powerful approaches for unsupervised representation learning. 
By leveraging diverse views or augmentations of input data, they aim to capture meaningful and informative representations that can generalize across different tasks and domains ~\citep{chen2020simple, hjelm2018learning, wu2018unsupervised, tian2019contrastive, chen2021exploring, gao2021simcse, bachman2019learning, 
oord2018representation, ye2019unsupervised, henaff2020data, misra2020self, caron2020unsupervised, haochen2021provable, caron2021emerging,li2021self, zbontar2021barlow, tsai2021note, bardes2021vicreg, tian2020makes, robinson2021contrastive, dubois2022improving}. 


Inspired by the widely adopted Masked Language Modeling (MLM) paradigm in NLP, such as BERT \citep{devlin2018bert},
Masked Image Modeling (MIM) \citep{zhang2022survey} has gained attention as a visual representation learning approach. Notably, several MIM methods, including iBOT \cite{zhou2021ibot}, SimMIM \citep{xie2022simmim}, and MAE \citep{he2022masked}, have demonstrated promising results in this domain. 

\paragraph{Matrix information theory.}
Recently, 
there have been attempts to generalize information theory to measure the relationships between matrices \citep{bach2022information, skean2023dime, zhang2023kernel,zhang2023relationmatch}. The idea is to apply the traditional information-theoretic quantities on the spectrum of matrices. \citep{zhang2023kernel} discuss the relationship between matrix entropy and effective rank. They also discuss the relationship between matrix KL divergence, total coding rate, and matrix entropy, and propose loss to improve the feature decorrelation-based method. \citep{liu2022self} use total coding rate to understand the feature decorrelation-based methods.

\paragraph{Theoretical understanding of self-supervised learning.}

The practical achievements of contrastive learning have ignited a surge of theoretical investigations into the understanding how contrastive loss works \cite{arora2019theoretical, haochen2021provable, haochen2022beyond, tosh2020contrastive, tosh2021contrastive, lee2020predicting, wang2022chaos, nozawa2021understanding, huang2021towards, tian2022deep, hu2022your, tan2023contrastive}. \cite{wang2020understanding} provide an insightful analysis of the optimal solutions of the InfoNCE loss, providing insights into the alignment term and uniformity term that constitute the loss, thus contributing to a deeper understanding of self-supervised learning. \cite{haochen2021provable, wang2022chaos, tan2023contrastive} explore contrastive self-supervised learning methods from a spectral graph perspective. 
Several theoretical investigations have delved into the realm of feature decorrelation based methods within the domain of self-supervised learning, as evidenced by a collection of notable studies~\citep{wen2022mechanism, tian2021understanding, garrido2022duality, balestriero2022contrastive, tsai2021note, pokle2022contrasting, tao2022exploring, lee2021predicting}. 


Compared to contrastive and feature decorrelation based methods, the theoretical understanding of masked image modeling is still in an early stage. \citet{cao2022understand} use the viewpoint of the integral kernel to understand MAE. \citet{zhang2022mask} use the idea of a masked graph to relate MAE with the alignment loss in contrastive learning. Recently, \citet{kong2023understanding} show MAE effectively detects and identifies a specific group of latent variables using a hierarchical model.

\section{Background}
 
\subsection{Matrix information-theoretic quantities} \label{mat info quantity}

In this subsection, we assume \textbf{all the mentioned matrices are positive semi-definite} and follow the constraint that all their \textbf{diagonal elements are $1$}.

We shall first provide the definition of (matrix) entropy as follows:

\begin{definition}[Matrix-based $\alpha$-order (R\'enyi) entropy~\citep{skean2023dime}] Suppose matrix $\mathbf{K}_1 \in \mathbb{R}^{n \times n}$ and $\alpha$ is a positive real number. The $\alpha$-order (R\'enyi) entropy for matrix $\mathbf{K}_1$ is defined as follows:
$$
\operatorname{H}_\alpha\left(\mathbf{K}_1\right)=\frac{1}{1-\alpha} \log \left[\operatorname{tr}\left(\left(\frac{1}{n} \mathbf{K}_1 \right)^\alpha\right)\right],
$$
where $\mathbf{K}^{\alpha}_1$ is the matrix power.

The case of $\alpha=1$ is defined as the von Neumann (matrix) entropy \citep{von2013mathematische}, i.e. 
$$
\operatorname{H}_1\left(\mathbf{K}_1\right)=-\operatorname{tr}\left(\frac{1}{n} \mathbf{K}_1 \log \frac{1}{n} \mathbf{K}_1 \right),
$$
\end{definition}
where $\log$ is the matrix logarithm.

Using the definition of matrix entropy, we can define matrix mutual information and joint entropy as follows.

\begin{definition}[Matrix-based mutual information~\citep{skean2023dime}] \label{matrix mutual information} Suppose matrix $\mathbf{K}_1, \mathbf{K}_2 \in \mathbb{R}^{n \times n}$ and $\alpha$ is a positive real number. The $\alpha$-order (R\'enyi) mutual information for matrices $\mathbf{K}_1$ and $\mathbf{K}_2$ is defined as follows:
$$
\operatorname{I}_{\alpha}(\mathbf{K}_1; \mathbf{K}_2) = \operatorname{H}_{\alpha}(\mathbf{K}_1) + \operatorname{H}_{\alpha}(\mathbf{K}_2) - \operatorname{H}_{\alpha}(\mathbf{K}_1 \odot \mathbf{K}_2).
$$
\end{definition}

\begin{definition}[Matrix-based joint entropy~\citep{skean2023dime}] Suppose matrix $\mathbf{K}_1, \mathbf{K}_2 \in \mathbb{R}^{n \times n}$ and $\alpha$ is a positive real number. The $\alpha$-order (R\'enyi) joint-entropy for matrices $\mathbf{K}_1$ and $\mathbf{K}_2$ is defined as follows:
$$
\operatorname{H}_{\alpha}(\mathbf{K}_1, \mathbf{K}_2) = \mathbf{H}_{\alpha}(\mathbf{K}_1 \odot \mathbf{K}_2),
$$ where $\odot$ is the (matrix) Hadamard product.
\end{definition}

\subsection{Canonical self-supervised learning losses}


We shall recap some canonical losses used in self-supervised learning. As we roughly characterize self-supervised learning into contrastive learning, feature decorrelation-based learning, and masked image modeling. We shall introduce the canonical losses used in these areas sequentially.

In contrastive and feature decorrelation-based learning, people usually adopt the Siamese architecture (dual networks), namely using two parameterized networks: the online network ${f}_{\theta}$ and the target network ${f}_{\phi}$. To create different perspectives of a batch of $B$ data points $\{ \mathbf{x}_i \}^B_{i=1}$, we randomly select an augmentation $\mathcal{T}$ from a predefined set $\tau$ and use it to transform each data point, resulting in new representations $\mathbf{z}^{(1)}_i={f}_{\theta}(\mathcal{T}(\mathbf{x}_i)) \in \mathbb{R}^d$ and $\mathbf{z}^{(2)}_i = {f}_{\phi}( \mathbf{x}_i) \in \mathbb{R}^d$ generated by the online and target networks, respectively. We then combine these representations into matrices $\mathbf{Z}_1=[ \mathbf{z}^{(1)}_1, \ldots ,\mathbf{z}^{(1)}_B]$ and $\mathbf{Z}_2=[\mathbf{z}^{(2)}_1 ,\ldots ,\mathbf{z}^{(2)}_B]$, we assume $\|\mathbf{z}^{(k)}_j \|_2=1$ ($k=1.2$ and $j=1,\cdots,B$). Denote the (batch normalized) vectors for each dimension $i$ ($1 \leq i \leq d$) of the online and target networks as $\bar{\mathbf{z}}^{(1)}_i$ and $\bar{\mathbf{z}}^{(2)}_i$, i.e. coordinate-wise $$ \bar{\mathbf{z}}^{(k)}_i(j)= \frac{\mathbf{z}^{(k)}_{j}(i)}{\sqrt{ \sum^B_{ j=1} (\mathbf{z}^{(k)}_{j}(i))^2 } }$$ ($k=1.2$ and $i=1,\cdots,d$, and $j=1,\cdots,B$). We also define $\bar{\mathbf{Z}}_1 = [\bar {\mathbf{z}}^{(1)}_1 \cdots \bar{\mathbf{z}}^{(1)}_d]^{\top}$ and $\bar{\mathbf{Z}}_2 = [\bar {\mathbf{z}}^{(2)}_1 \cdots \bar{\mathbf{z}}^{(2)}_d]^{\top}$, where $\bar {\mathbf{z}}^{(k)}_i = [\bar {\mathbf{z}}^{(k)}_i(1) \cdots \bar {\mathbf{z}}^{(k)}_i(B)]^{\top}$ ($k=1,2$). 

The idea of contrastive learning is to make the representation of similar objects align and dissimilar objects apart. One of the widely adopted losses in contrastive learning is InfoNCE loss \citep{chen2020simple}, which is defined as follows:
\begin{align}
\mathcal{L}_{\text{InfoNCE}} =& - \frac{1}{2} ( \sum^{B}_{i=1} \log \frac{\exp{((\mathbf{z}^{(1)}_i)^{\top} \mathbf{z}^{(2)}_i)}}{ \sum^B_{j=1} \exp{((\mathbf{z}^{(1)}_i)^{\top} \mathbf{z}^{(2)}_j }) } \nonumber \\  &+   \sum^{B}_{i=1} \log \frac{\exp{((\mathbf{z}^{(2)}_i)^{\top} \mathbf{z}^{(1)}_i)}}{ \sum^B_{j=1} \exp{((\mathbf{z}^{(2)}_i)^{\top} \mathbf{z}^{(1)}_j }) }  ).    
\end{align}

As the InfoNCE loss may be difficult to analyze theoretically, \citet{haochen2021provable} then propose spectral contrastive loss as a good surrogate for InfoNCE. The loss is defined as follows:
\begin{equation} \label{spectral contrastive loss}
\mathcal{L}_{SC} = \sum^B_{i=1} \mid \mid \mathbf{z}^{(1)}_i - \mathbf{z}^{(2)}_i \mid \mid^{2}_2 + \lambda \sum_{ i \neq j} ((\mathbf{z}^{(1)}_i)^{\top} \mathbf{z}^{(2)}_j)^2,    
\end{equation}
where $\lambda$ is a hyperparameter. (Here, we slightly generalize the initial loss a bit, the initial paper sets $\lambda=1$.)

The idea of feature decorrelation-based learning is to learn useful representation by decorrelating features and not explicitly distinguish negative samples. Some notable losses involve VICReg \citep{bardes2021vicreg}, and Barlow Twins \citep{zbontar2021barlow}.
The Barlow Twins loss is given as follows:
\begin{equation}  \label{barlow twins loss}
\mathcal{L}_{BT} = \sum^d_{i=1}\left(1-\mathcal{C}_{i i}\right)^2+\lambda  \sum^d_{i=1} \sum_{j \neq i} \mathcal{C}_{i j}{ }^2    ,
\end{equation}
where $\lambda$ is a hyperparameter and $\mathcal{C}_{i j} = (\bar{\mathbf{z}}^{(1)}_i)^{\top} \bar{\mathbf{z}}^{(2)}_j$ is the cross-correlation coefficient.


The idea of masked image modeling is to learn useful representations by generating the representation from partially visible patches and predicting the rest of the image from the representation, thus useful information in the image remains in the representation. We shall briefly introduce MAE~\citep{he2022masked} as an example. In masked image modeling, people usually adopt only one branch and do not use Siamese architecture. Given a batch of images $\{ \mathbf{x}_i \}^B_{i=1}$, we shall first partition each of the images into $n$ disjoint patches $\mathbf{x}_i = \mathbf{x}_i(j)$ ($1 \leq j \leq n$). Then $B$ random mask vectors $\mathbf{m}_i \in \{0,1\}^n$ will be generated, and denote the two images generated by these masks as 
\begin{equation}
\mathbf{x}^{(1)}_i = \mathbf{x}_i \odot \mathbf{m}_i \quad \text{and} \quad \mathbf{x}^{(2)}_i = \mathbf{x}_i \odot (1-\mathbf{m}_i).    
\end{equation}

The model consists of two modules: an encoder $f$ and a decoder $g$. The encoder transform each view $\mathbf{x}^{(1)}_i$ into a representation $\mathbf{z}_i=f(\mathbf{x}^{(1)}_i)$. The loss function is $\sum^{B}_{i=1} \| g(\mathbf{z}_i) - \mathbf{x}^{(2)} \|_2^2$. We also denote the representations in a batch as $\mathbf{Z}=[\mathbf{z}_1, \cdots, \mathbf{z}_B]$.

Finally, we will present the U-MAE loss \citep{zhang2022mask} as:
\begin{equation*}
\mathcal{L}_{\text{U-MAE}} = \mathcal{L}_{\operatorname{MAE}} + \gamma  \sum_{i \neq j} (\mathbf{z}^{\top}_i \mathbf{z}_j)^2 ,    
\end{equation*}
where $\gamma$ is a hyper-parameter.

The goal of this paper is to use a matrix information maximization viewpoint to understand the seemingly different losses in contrastive and feature decorrelation-based methods. We would like also to use matrix information-theoretic tools to improve MAE. We only analyze $4$ popular losses: spectral contrastive, Barlow Twins, MAE, and U-MAE. All the proofs can be found in Appendix \ref{sec:proofs}. More experiments can be found in Appendix \ref{more exp}.

\section{Applying matrix information theory to contrastive and feature decorrelation-based methods}
As we have discussed in the preliminary session, in contrastive and feature decorrelation-based methods, a common practice is to use two branches (Siamese architecture) namely an online network and a target network to learn useful representations. However, the relationship of the two branches during the training process is mysterious. In this section, we shall use matrix information quantities to unveil the complicated relationship in Siamese architectures.

\subsection{Measuring the mutual information}

One interesting derivation in \cite{oord2018representation} is that it can be shown that
\begin{equation} \label{infonce bound}
\mathcal{L}_{\text{InfoNCE}} \geq -\operatorname{I}(\mathbf{Z}^{(1)} ; \mathbf{Z}^{(2)}) + \log B,    
\end{equation}
where $\mathbf{Z}^{(i)}$ denotes the sampled distribution of the representation.

Though InfoNCE loss is a promising surrogate for estimating the mutual information between the two branches in self-supervised learning. \cite{sordoni2021decomposed} doubt its effectiveness when facing high-dimensional inputs, where the mutual information may be larger than $\log B$, making the bound vacuous. Then a natural question arises: Can we calculate the mutual information exactly? Unfortunately, it is hard to calculate the mutual information reliably and effectively. Thus we want to see the effect of changing our strategy by using the \emph{matrix} mutual information instead of the traditional one.

As the matrix mutual information has a closed-form expression with only requires a few mild conditions on the input matrices (please refer to section ~\ref{mat info quantity}), one question remains: How to choose the matrices used in the (matrix) mutual information? We find the (batch normalized) sample covariance matrix and batch gram matrix of $l_2$ normalized representations serve as good candidates. The reason is that by using normalization, the covariance and gram matrices naturally satisfy the requirements that: All the diagonals equal to $1$, the matrix is positive semi-definite and it is easy to estimate from data samples.

\begin{figure}[t] 
\centering 
\includegraphics[width=0.9\columnwidth]{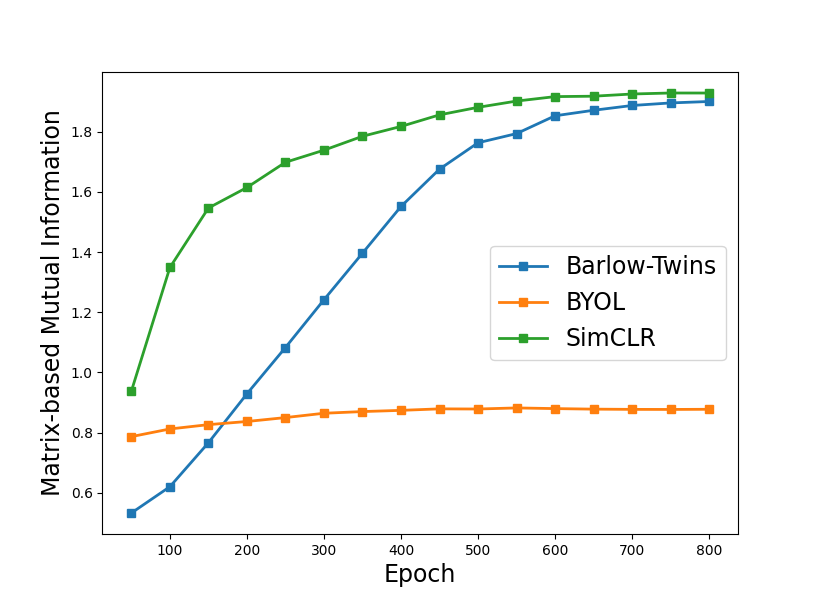}
    \caption{Visualization of matrix-based mutual information on CIFAR10 for Barlow-Twins, BYOL, and SimCLR.}
    \label{fig:MMI}
\end{figure}

Notably, the covariance and garm matrix can be seen to have an informal ``duality'' \citep{garrido2022duality}. Specifically, the sample covariance matrix can be expressed as $\mathbf{Z}\mathbf{Z}^{\top} \in \mathbb{R}^{d \times d}$ and the batch-sample Gram matrix can be expressed as $\mathbf{Z}^{\top}\mathbf{Z} \in \mathbb{R}^{B \times B}$. The closeness of these two matrices makes us call $B$ and $d$ has duality. As matrix information theory can not only deal with samples from batches but also can exploit the relationship among batches. This makes this theory well-suited for analyzing self-supervised learning methods. 

Notably, spectral contrastive loss (Eqn.~\ref{spectral contrastive loss}) is a good surrogate loss for InfoNCE loss and calculates the loss involving the batch gram matrix. Another famous loss used in feature decorrelation-based methods is the Barlow Twins (Eqn.~\ref{barlow twins loss}), which involves the batch-normalized sample covariance matrix. Therefore, these two losses will be our main focus for theoretical investigation.

As traditional information theory provides a bound Eqn.~(\ref{infonce bound}), thus we are interested in investigating whether there is a matrix information type bound. In the following, we will show that spectral contrastive loss and Barlow Twins loss have (matrix) mutual information bound. Specifically, for ease of theoretical analysis, we first consider setting the $\alpha$ in entropy to be $2$.

To prove the bound, we will first present a proposition that relates the mutual information with the Frobenius norm.

\begin{proposition} \label{mutual information reduction}
$\operatorname{I}_2(\mathbf{K}_1; \mathbf{K}_2) = 2\log d - \log \frac{|| \mathbf{K}_1 ||^2_F || \mathbf{K}_2 ||^2_F}{|| \mathbf{K}_1 \odot \mathbf{K}_2 ||^2_F}$, where $d$ is the size of matrix $\mathbf{K}_1$.
\end{proposition}

We will also need a technical proposition that relates the cross-correlation and auto-correlation.
\begin{lemma} \label{cross and cov}
Suppose $\mathbf{a}$, $\mathbf{b}$, $\mathbf{a}'$ and $\mathbf{b}'$ are $l_2$ normalized, then $|\mathbf{a}^{\top} \mathbf{b}  |  \leq |\mathbf{a}^{\top} \mathbf{b}'  | + \| \mathbf{b} - \mathbf{b}'\| = |\mathbf{a}^{\top} \mathbf{b}'  | +  \sqrt{2(1 - \mathbf{b}^{\top} \mathbf{b}')}$.
\end{lemma}

Using the previous two propositions, we can derive the following bounds that relate the matrix mutual information and the loss value. 

\begin{theorem}  \label{MI bound}
1. For the spectral contrastive loss, we have 
\begin{equation*}
\operatorname{I}_2(\mathbf{Z}^{\top}_1\mathbf{Z}_1 ; \mathbf{Z}^{\top}_2\mathbf{Z}_2) \geq \log B - 2 \log(1+ (2 + \frac{2}{B \lambda}) \mathcal{L}_{SC} ).
\end{equation*}
2. For the Barlow Twins' loss, we have 
\begin{equation*}
\operatorname{I}_2(\bar{\mathbf{Z}}_1\bar{\mathbf{Z}}^{\top}_1 ; \bar{\mathbf{Z}}_2\bar{\mathbf{Z}}^{\top}_2) \geq \log d - 2 \log(1+ \frac{2}{d \lambda} \mathcal{L}_{BT} + 4 \sqrt{d\mathcal{L}_{BT}} ).
\end{equation*}
\end{theorem}

\begin{proof}
We will only present the proof for spectral contrastive loss as Barlow Twins loss is similar.

By Proposition \ref{mutual information reduction}, we know that $\operatorname{I}_2(\mathbf{Z}^{\top}_1\mathbf{Z}_1; \mathbf{Z}^{\top}_2\mathbf{Z}_2) = 2\log B - \log \frac{|| \mathbf{Z}^{\top}_1\mathbf{Z}_1 ||^2_F || \mathbf{Z}^{\top}_2\mathbf{Z}_2 ||^2_F}{|| \mathbf{Z}^{\top}_1\mathbf{Z}_1 \odot \mathbf{Z}^{\top}_2\mathbf{Z}_2 ||^2_F} \geq 2\log B - \log \frac{|| \mathbf{Z}^{\top}_1\mathbf{Z}_1 ||^2_F || \mathbf{Z}^{\top}_2\mathbf{Z}_2 ||^2_F}{B} = \log B - \log { \frac{|| \mathbf{Z}^{\top}_1\mathbf{Z}_1 ||^2_F}{B} \frac{|| \mathbf{Z}^{\top}_2\mathbf{Z}_2 ||^2_F}{B}}.$

On the other hand, $\frac{|| \mathbf{Z}^{\top}_1\mathbf{Z}_1 ||^2_F}{B} = 1 + \frac{\sum_{i \neq j} ((\mathbf{z}^{(1)}_i)^{\top}  \mathbf{z}^{(1)}_j)^2 }{B}.$

Using Lemma \ref{cross and cov}, we know that $((\mathbf{z}^{(1)}_i)^{\top}  \mathbf{z}^{(1)}_j)^2 \leq (|(\mathbf{z}^{(1)}_i)^{\top}  \mathbf{z}^{(2)}_j| +  \| \mathbf{z}^{(1)}_j - \mathbf{z}^{(2)}_j \| )^2 \leq 2(|(\mathbf{z}^{(1)}_i)^{\top}  \mathbf{z}^{(2)}_j|^2 +  \| \mathbf{z}^{(1)}_j - \mathbf{z}^{(2)}_j \|^2).$

Therefore, $\sum_{i \neq j} ((\mathbf{z}^{(1)}_i)^{\top}  \mathbf{z}^{(1)}_j)^2 \leq 2 (\sum_{i \neq j} |(\mathbf{z}^{(1)}_i)^{\top}  \mathbf{z}^{(2)}_j|^2 + \sum_{i \neq j}  \| \mathbf{z}^{(1)}_j - \mathbf{z}^{(2)}_j \|^2 ) < 2(\sum_{i \neq j} |(\mathbf{z}^{(1)}_i)^{\top}  \mathbf{z}^{(2)}_j|^2 + B \sum_j  \| \mathbf{z}^{(1)}_j - \mathbf{z}^{(2)}_j \|^2) \leq 2(\frac{1}{\lambda} + B) \mathcal{L}_{SC}$.

Combining all the above, the conclusion follows.
\end{proof}

What about the mutual information when $\alpha \neq 2$. For example $\alpha=1$? We then plot the mutual information of covariance matrices between branches in Figure~\ref{fig:MMI}. We can find out that the mutual information increases during training, which is similar to the case of $\alpha=2$ proved by Theorem \ref{MI bound}. More interestingly, the mutual information of SimCLR and Barlow Twins meet at the end of the training, strongly emphasizing the duality of these algorithms. The empirical findings motivate us to consider the case of general $\alpha>0$. 

Unfortunately, it is hard for us to provide bounds similar to Theorem \ref{MI bound} for general $\alpha>0$. But interestingly, we find the following interesting theorem. 

\begin{theorem} \label{MI max 1}
When $\alpha>0$, {Barlow Twins and spectral contrastive learning losses maximize the matrix mutual information when their loss value is $0$.} 
\end{theorem}

The proof of theorem \ref{MI max 1}  relies on the following upper bound (Proposition \ref{mutual information upper bound}). The key idea is when the loss value is $0$, the mutual information can be explicitly calculated and meets the upper bound.

\begin{proposition} \label{mutual information upper bound}
Suppose $\mathbf{K}_1$ and $\mathbf{K}_2$ are $d \times d$ positive semi-definite matrices with the constraint that each of its diagonals is $1$. Then $\operatorname{I}_{\alpha}(\mathbf{K}_1; \mathbf{K}_2) \leq \log d$.
\end{proposition}

By combining Theorems \ref{MI bound} and \ref{MI max 1}, we can conclude the following corollary.

\begin{corollary} \label{MI max}
When $\alpha=2$, {the bounds given by Theorem \ref{MI bound} is tight when loss values are $0$.} 
\end{corollary}

From the above theorems, we know that when minimizing the spectral contrastive loss and Barlow Twins loss, the mutual information follows a trajectory towards its maximum. This can be seen as mitigating the slight drawback of bound Eqn.~(\ref{infonce bound}) in that it only provides an inequality and does not discuss the optimal point.

\subsection{Measuring the (joint) entropy}

After discussing the application of matrix mutual information in self-supervised learning. We wonder how another import quantity (joint entropy) evolves during the process.

We can show that the matrix joint entropy can indeed reflect the dimensions of representations in Siamese architectures through the following Proposition \ref{joint entropy upper bound}.

\begin{proposition} \label{joint entropy upper bound}
The joint entropy lower bounds the representation rank in two branches by having the inequality as follows:
\begin{align*}
&\operatorname{H}_1(\mathbf{K}_1, \mathbf{K}_2) \leq \log(\operatorname{rank}(\mathbf{K}_1 \odot \mathbf{K}_2)) \\
&\leq \log \operatorname{rank}(\mathbf{K}_1) + \log \operatorname{rank}(\mathbf{K}_2).  
\end{align*}
\begin{align*}
&\max \{\operatorname{H}_{\alpha}(\mathbf{K}_1), \operatorname{H}_{\alpha}(\mathbf{K}_2) \} \\
&\leq \operatorname{H}_{\alpha}(\mathbf{K}_1, \mathbf{K}_2)  \leq \operatorname{H}_{\alpha}(\mathbf{K}_1) + \operatorname{H}_{\alpha}(\mathbf{K}_2).  
\end{align*}
\end{proposition}

This proposition shows that the bigger the joint entropy between the two branches is, the less likely that the representation (rank) collapse. Interestingly, similar results can be proven for (traditional) entropy surrogates \citep{yu2020learning}, which we will briefly introduce as follows.

We shall introduce a matrix-based surrogate for entropy as follows:
\begin{definition} \label{TCR loss}
Suppose $B$ samples $\mathbf{Z} = [\mathbf{z}_1, \mathbf{z}_2, \cdots, \mathbf{z}_B] \in \mathbb{R}^{d \times B}$ are i.i.d. samples from a distribution $p(z)$. Then the total coding rate (TCR) \citep{yu2020learning} of $p(z)$ is defined as follows:
\begin{equation}
\text{TCR}_{\mu}(\mathbf{Z}) = \log \operatorname{det}(\mu \mathbf{I}_d + \mathbf{Z}\mathbf{Z}^{\top}),
\end{equation}
where $\mu$ is a non-negative hyperparameter.
\end{definition}
For notation simplicity, we shall also write $\text{TCR}_{\mu}(\mathbf{Z})$ as $\text{TCR}_{\mu}(\mathbf{Z}\mathbf{Z}^{\top})$. Notably, there is a close relationship between TCR and matrix entropy, which is presented in the following theorem through the lens of matrix KL divergence \ref{matrix kl}. The key is utilizing the asymmetries of the matrix KL divergence \citep{zhang2023kernel}.

\begin{definition}[Matrix KL divergence \citep{bach2022information}] \label{matrix kl}
Suppose matrices $\mathbf{K}_1, \mathbf{K}_2 \in \mathbb{R}^{n \times n}$ which $\mathbf{K}_1(i,i) = \mathbf{K}_2(i,i) = 1$ for every $i=1, \cdots, n$. Then the Kullback-Leibler (KL) divergence between two positive semi-definite matrices $\mathbf{K}_1$ and $\mathbf{K}_2$ is defined as
\begin{equation*}
\operatorname{KL}\left(\mathbf{K}_1 \mid \mid \mathbf{K}_2 \right)=\operatorname{tr}\left[\mathbf{K}_1\left(\log \mathbf{K}_1 - \log \mathbf{K}_2 \right)\right].
\end{equation*}
\end{definition}

\begin{proposition} \label{TCR entropy relation}
Suppose $\mathbf{K}$ is a $d \times d$ matrix with the constraint that each of its diagonals is $1$. Then the following equalities holds:
\begin{equation}
\begin{aligned}
&\operatorname{H}_1(\mathbf{K}) = \log d - \frac{1}{d} \operatorname{KL}(\mathbf{K}, \mathbf{I}_d),\\
&
\operatorname{TCR}_{\mu}(\mathbf{K}) = d \log(1+\mu) -\operatorname{KL}(\mathbf{I}_d, \frac{1}{1+\mu} (\mu \mathbf{I}_d + \mathbf{K})).
\end{aligned}
\end{equation}
\end{proposition}

\begin{figure}[t] 
\centering 
\includegraphics[width=0.9\columnwidth]{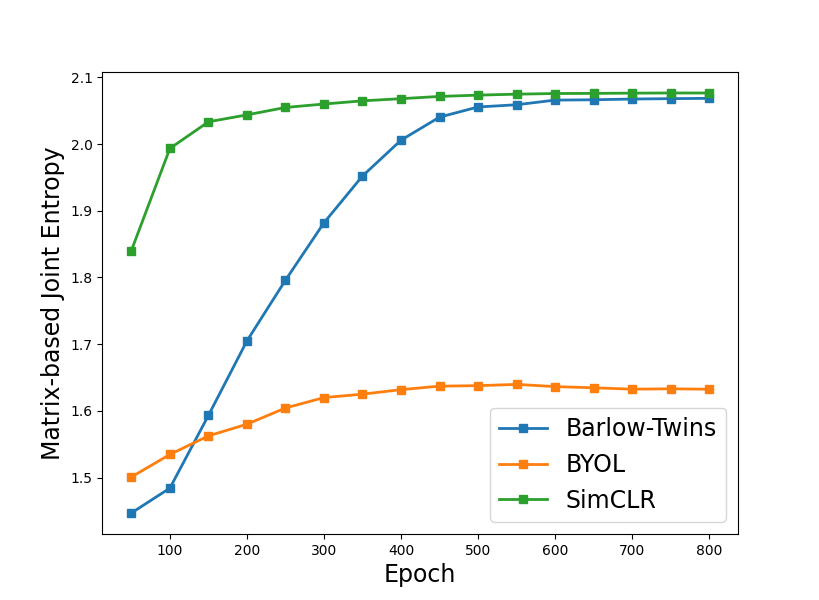}
    \caption{Visualization of matrix-based joint entropy on CIFAR10 for Barlow-Twins, BYOL and SimCLR.}
    \label{fig:MJE}
\end{figure}

As TCR can be treated as a good surrogate for entropy, we can obtain the following bound for its joint entropy version.

\begin{proposition} \label{joint coding rate}
The (joint) total coding rate upper bounds the rate in two branches by having the inequality as follows:
\begin{equation}
\operatorname{TCR}_{\mu^2+2 \mu}(\mathbf{K}_1 \odot \mathbf{K}_2) \geq \operatorname{TCR}_{\mu}(\mathbf{K}_1) + \operatorname{TCR}_{\mu}(\mathbf{K}_2).   
\end{equation}
\end{proposition}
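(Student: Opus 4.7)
The plan is to recognize the argument of the left-hand side as a single Hadamard product of two positive semi-definite matrices, and then apply a classical determinantal inequality on Hadamard products (Oppenheim's inequality).

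First, I would compute $(\mu \mathbf{I}_d + \mathbf{K}_1) \odot (\mu \mathbf{I}_d + \mathbf{K}_2)$ by bilinearity of the Hadamard product. This produces four pieces: $\mu^2 (\mathbf{I}_d \odot \mathbf{I}_d)$, $\mu (\mathbf{I}_d \odot \mathbf{K}_2)$, $\mu (\mathbf{K}_1 \odot \mathbf{I}_d)$, and $\mathbf{K}_1 \odot \mathbf{K}_2$. The crucial observation is that since the diagonals of $\mathbf{K}_1$ and $\mathbf{K}_2$ are all equal to $1$, each cross term $\mathbf{I}_d \odot \mathbf{K}_j$ collapses to $\mathbf{I}_d$. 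This yields the identity
$$(\mu \mathbf{I}_d + \mathbf{K}_1) \odot (\mu \mathbf{I}_d + \mathbf{K}_2) = (\mu^2 + 2\mu)\mathbf{I}_d + \mathbf{K}_1 \odot \mathbf{K}_2,$$
which explains the otherwise mysterious shift $\mu^2 + 2\mu$ in the statement and recasts the left-hand side as $\log\det\bigl((\mu \mathbf{I}_d + \mathbf{K}_1) \odot (\mu \mathbf{I}_d + \mathbf{K}_2)\bigr)$.

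Second, I would invoke Oppenheim's inequality: for any two $d\times d$ positive semi-definite matrices $A$ and $B$, $\det(A \odot B) \geq \det(A) \prod_i B_{ii}$. Combining this with Hadamard's determinantal inequality $\det(B) \leq \prod_i B_{ii}$ gives the cleaner corollary $\det(A \odot B) \geq \det(A)\det(B)$. Applying this to $A = \mu \mathbf{I}_d + \mathbf{K}_1$ and $B = \mu \mathbf{I}_d + \mathbf{K}_2$ (both PSD because $\mu \geq 0$ and the $\mathbf{K}_i$ are PSD Gram matrices in the setting of the paper) and then taking logarithms produces precisely the desired inequality
$$\log\det\bigl((\mu^2+2\mu)\mathbf{I}_d + \mathbf{K}_1 \odot \mathbf{K}_2\bigr) \geq \log\det(\mu \mathbf{I}_d + \mathbf{K}_1) + \log\det(\mu \mathbf{I}_d + \mathbf{K}_2).$$

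I do not expect a serious obstacle: the only nontrivial ingredient is Oppenheim's inequality, which is a standard result. The single place where the special hypothesis of unit diagonals is used is the Hadamard expansion in the first step, so that identity deserves to be stated explicitly. One bookkeeping point worth verifying is positive semi-definiteness of $\mu \mathbf{I}_d + \mathbf{K}_i$, which is immediate from $\mu \geq 0$ and PSD-ness of $\mathbf{K}_i$; if one wishes to allow equality in degenerate cases, invoking the standard continuity/perturbation argument $\mathbf{K}_i \mapsto \mathbf{K}_i + \varepsilon \mathbf{I}_d$ and letting $\varepsilon \downarrow 0$ removes any concern about strict positive definiteness required by some formulations of Oppenheim.
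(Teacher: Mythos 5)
Your proof is correct and follows essentially the same route as the paper's (one-sentence) proof: the unit-diagonal identity $(\mu \mathbf{I}_d + \mathbf{K}_1)\odot(\mu \mathbf{I}_d + \mathbf{K}_2) = (\mu^2+2\mu)\mathbf{I}_d + \mathbf{K}_1\odot\mathbf{K}_2$ followed by the Hadamard-product determinant inequality $\det(A\odot B)\geq\det(A)\det(B)$ (Oppenheim plus Hadamard). You simply spell out the details that the paper leaves implicit.
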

Combining Propositions~\ref{TCR entropy relation}, \ref{joint entropy upper bound}, and~\ref{joint coding rate}, it is clear that the bigger the entropy is for each branch the bigger the joint entropy. Thus by combining the conclusion from the above two theorems, it is evident that the joint (matrix or TCR) entropy strongly reflects the extent of collapse during training.

We will then show a bound relating to the matrix joint entropy and the loss values. This remarkable conclusion is proved when the Renyi entropy order $\alpha=2$.

We shall first present a proposition that relates the joint entropy with the Frobenius norm. 

\begin{proposition} \label{joint entropy reduction}
Suppose $\mathbf{K}_1, \mathbf{K}_2\in \mathbb{R}^{d \times d}$. Then
$\operatorname{H}_2(\mathbf{K}_1, \mathbf{K}_2) = 2 \log d - \log \mid \mid \mathbf{K}_1 \odot \mathbf{K}_2 \mid \mid^2_F$, where $F$ is the Frobenius norm.
\end{proposition}

We can use Lemma \ref{cross and cov} and Proposition \ref{joint entropy reduction} to bind the matrix joint entropy with the loss values.

\begin{theorem}  \label{joint entropy loss bound}
1. In the spectral contrastive loss, we have 
\begin{equation*}
\operatorname{H}_2(\mathbf{Z}^{\top}_1\mathbf{Z}_1, \mathbf{Z}^{\top}_2\mathbf{Z}_2) \geq \log B -  \log(1+ (2 + \frac{2}{B \lambda}) \mathcal{L}_{SC} ).
\end{equation*}

2. In the Barlow Twins loss, we have 
\begin{equation*}
\operatorname{H}_2(\bar{\mathbf{Z}}_1\bar{\mathbf{Z}}^{\top}_1, \bar{\mathbf{Z}}_2\bar{\mathbf{Z}}^{\top}_2) \geq \log d -  \log(1+ \frac{2}{d \lambda} \mathcal{L}_{BT} + 4 \sqrt{d\mathcal{L}_{BT}} ).
\end{equation*}
\end{theorem}

What about the joint entropy when $\alpha=1$ behaves empirically? We shall then plot the joint entropy of covariance matrices between branches in Figure \ref{fig:MJE}. We can find out that the joint entropy increases during training. More interestingly, the joint entropy of SimCLR and Barlow Twins meet at the end of training, strongly reflects a duality of these algorithms. 

Motivated by the above bound, one may wonder what will happen to the joint entropy when the loss is optimized to $0$ for general $\alpha>0$.

Then we can show the following theorem for general $\alpha>0$.
\begin{theorem} \label{thm:joint-entropy 1}
When $\alpha>0$, {Barlow Twins and spectral contrastive learning losses maximize the matrix joint entropy when their loss value is $0$.} 
\end{theorem}

The key to proving theorem \ref{thm:joint-entropy 1} lies in the following proposition that finds the optimal point of entropy. 

\begin{proposition} \label{entropy optimal point}
\begin{equation}
\mathbf{I}_d = \operatorname{argmax} \operatorname{H}_{\alpha}(\mathbf{K}), \operatorname{and} 
\mathbf{I}_d = \operatorname{argmax} 
\operatorname{TCR}_{\mu}(\mathbf{K}),
\end{equation} where the maximization is over $d \times d$ positive semi-definite matrices with the constraint that each of its diagonals is $1$.
\end{proposition}

By combining Theorems \ref{joint entropy loss bound} and \ref{thm:joint-entropy 1}, we can conclude the following corollary.

\begin{corollary} \label{Joint entro max}
When $\alpha=2$, {the bounds given by Theorem \ref{joint entropy loss bound} is tight when loss values are $0$.} 
\end{corollary}

Similarly, one may also prove that
\begin{theorem}\label{TCR bound}
$\operatorname{TCR}_{\mu}(\bar{\mathbf{Z}}_1\bar{\mathbf{Z}}^{\top}_1\odot \bar{\mathbf{Z}}_2\bar{\mathbf{Z}}^{\top}_2) \geq d \log(\mu +1) - \frac{1}{2 \mu^2} ( \frac{2}{\lambda} \mathcal{L}_{BT} + 4(d-1) \sqrt{d\mathcal{L}_{BT}} ).$   
\end{theorem}
\textbf{Remark}: The bound given by theorem \ref{TCR bound} is also tight when the loss value is 0. A similar bound can also be proven batch-wise.

From the above theorems, we know that when minimizing the spectral contrastive loss and Barlow Twins loss, the matrix joint entropy follows a trajectory towards its maximum.

Our theoretical results may show that various contrastive and feature-decorrelation-based methods have similar implicit information maximization processes, thus explaining why they get comparable performance after sufficient training.

\section{Applying matrix information theory to masked image modeling}

As we have previously discussed the central role of mutual information and joint entropy in the contrastive and feature decorrelation-based methods (Which are all Siamese architecture-based due to the use of augmented views of images). We wonder if can we apply this matrix information theory to improve self-supervised methods using only one network, not the Siamese architecture. To the best of our knowledge, the famous vision self-supervised method using only one network is the masked autoencoder type (MAE) \citep{he2022masked}.

From a (traditional) information-theoretic point of view, when the two branches merge into one branch the mutual information $\operatorname{I}(\mathbf{X};\mathbf{X})$ and the joint entropy $\operatorname{H}(\mathbf{X},\mathbf{X})$ both equal to the Shannon entropy $\operatorname{H}(\mathbf{X})$. By Propositions~\ref{TCR entropy relation}, \ref{joint entropy upper bound}, and~\ref{joint coding rate}, one may see that the joint entropy maximization is closely related to each branch's maximization. Additionally, by the conclusion of Theorems \ref{joint entropy loss bound} and \ref{thm:joint-entropy 1}, one may expect a higher entropy during contrastive and feature decorrelation-based methods. Thus we would like to use the matrix-based entropy in MAE training.

Moreover, matrix entropy can be shown to be very close to a quantity called effective rank. And \citet{zhang2022mask, garrido2023rankme} show that the effective rank is a critical quantity for better representation. The definition of effective rank is formally stated in Definition \ref{def:erank} and it is easy to show when the matrix is positive semi-definite and has all its diagonal being $1$ the effective rank is the exponential of the matrix entropy \citep{zhang2023kernel}.

\begin{definition}[Effective Rank~\citep{roy2007effective}]
\label{def:erank}
For a non-all-zero matrix $\mathbf{A} \in \mathbb{R}^{n \times n}$, the effective rank, denoted as $\operatorname{erank}(\mathbf{A})$, is defined as
\begin{equation}
\operatorname{erank}(\mathbf{A}) \triangleq \exp {(\operatorname{H}\left(p_1, p_2, \ldots, p_n\right))},
\end{equation}
where $p_i = \frac{\sigma_i}{\sum_{k=1}^{n} \sigma_k}$, $\{\sigma_i \mid i = 1,\cdots,n \}$ represents the singular values of $\mathbf{A}$, and $\operatorname{H}$ denotes the Shannon entropy.
\end{definition}

Thus it is natural to add the matrix entropy to the MAE loss to give a new self-supervised learning method. As the numerical instability of calculating matrix entropy is larger than its proxy TCR during training, we shall use the TCR loss (definition \ref{TCR loss}), which is a matrix-based estimator for entropy \citep{yu2020learning}.

Recall that we assume each representation $z_i$ is $l_2$ normalized. If we take the latent distribution of ${Z}$ as the uniform distribution on the unit hyper-sphere $S^{d-1}$, we shall get the following loss for self-supervised learning.
\begin{equation}
\mathcal{L}_{\text{M-MAE}} \triangleq \mathcal{L}_{\operatorname{MAE}} - \lambda \cdot \operatorname{TCR}_{\mu}(\mathbf{Z}),   
\end{equation}
where $\lambda$ is a loss-balancing hyperparameter.

The name of matrix variational masked auto-encoder (M-MAE) is due to the reason that we can link this new loss to a traditional unsupervised learning method variational auto-encoder (VAE) \citep{doersch2016tutorial}.

Recall the loss for traditional variational auto-encoder which is given as follows.
\begin{equation*}
\begin{aligned}
\mathcal{L}_{\text{VAE}} \triangleq & \mathbb{E}_{\mathbf{z}}[-\log q(\mathbf{x} | \mathbf{z})+ \operatorname{KL}( p(\mathbf{z} | \mathbf{x}) \| q(\mathbf{z}) )] ,\\
& \text{where } \mathbf{z} \sim p(\mathbf{z} | \mathbf{x}).
\end{aligned}    
\end{equation*}
The loss contains two terms, the first term $-\log q(\mathbf{x} | \mathbf{z})$ is a reconstruction loss that measures the decoding error. The second term is a discriminative term, which measures the divergence of the encoder distribution $p(\mathbf{z} | \mathbf{x})$ with the latent distribution $q(\mathbf{z})$.

We will first show why MAE loss resembles the first term in VAE, i.e. $\mathbb{E}_{\mathbf{z}}[-\log q(\mathbf{x} | \mathbf{z})]$. In the context of masked image modeling, we usually use MSE loss in place of the log-likelihood. For any input image $\mathbf{x}$, the process of randomly generating a masked vector $m$ and obtaining $\mathbf{z}=f(\mathbf{x} \odot \mathbf{m})$ can be seen as modeling the generating process of $\mathbf{z} | \mathbf{x}$. The decoding process $\mathbf{x} | \mathbf{z}$ can be modeled by concatenating $g(\mathbf{z})$ and $\mathbf{x} \odot \mathbf{m}$ by the (random) position induced by $m$. Thus the reconstruction loss will be $\| \text{concat}(g(\mathbf{z}), \mathbf{x} \odot \mathbf{m}) - \mathbf{x} \|^2_2= \| g(\mathbf{z}) - \mathbf{x} \odot (1-\mathbf{m})  \|^2_2$. For a batch of images $\{ \mathbf{x}_i \}^B_{i=1}$, this recovers the MAE loss. 

We will then show how matrix entropy resembles the second term in VAE, i.e. $\mathbb{E}_{\mathbf{z}}[\operatorname{KL}( p(\mathbf{z} | \mathbf{x}) \| q(\mathbf{z}) )]$. This is clear by noticing that $q(\mathbf{z})$ is a latent distribution on the unit hyper-sphere $S^{d-1}$ and we naturally choose it as uniform distribution. By taking the covariance matrix of $p(\mathbf{z} | \mathbf{x})$ and $q(\mathbf{z})$ and using the matrix KL divergence (definition \ref{matrix kl}), this term becomes $\operatorname{KL}(\mathbf{Z}\mathbf{Z}^{\top} \| \mathbf{I}_d)$. By Theorem \ref{TCR entropy relation}, this closely relates to the TCR (and matrix entropy)

Finally, we will present the link of our M-MAE loss to a state-of-the-art one U-MAE \citep{zhang2022mask}.

\begin{theorem}
\label{thm:subsume}
U-MAE is a second-order approximation of our proposed M-MAE.   
\end{theorem}

\begin{proof}
The key point is noticing that representations are $l_2$ normalized and the fact that $\|\mathbf{Z}^{\top}\mathbf{Z}\|^2_F = \operatorname{tr}( (\mathbf{Z}^{\top}\mathbf{Z})^2)$.

Using Taylor expansion, we will have:
\begin{align*}
\mathcal{L}_{\text{M-MAE}} 
& = \mathcal{L}_{\operatorname{MAE}} - \lambda \cdot \log \operatorname{det}(\mathbf{I}_d+ \frac{1}{\mu}\mathbf{Z}\mathbf{Z}^{\top}) + \text{Const.} \\
& = \mathcal{L}_{\operatorname{MAE}} - \lambda \cdot \log \operatorname{det}(\mathbf{I}_B+ \frac{1}{\mu}\mathbf{Z}^{\top}\mathbf{Z}) + \text{Const.} \\
&= \mathcal{L}_{\operatorname{MAE}} - \lambda \cdot  \operatorname{tr} \log(\mathbf{I}_B+ \frac{1}{\mu}\mathbf{Z}^{\top}\mathbf{Z}) + \text{Const.} \\
&= \mathcal{L}_{\operatorname{MAE}} - \lambda \cdot  \operatorname{tr}(\frac{1}{\mu}\mathbf{Z}^{\top}\mathbf{Z} - \frac{1}{2\mu^2} (\mathbf{Z}^{\top}\mathbf{Z})^2 + \cdots) \\
&= \mathcal{L}_{\operatorname{U-MAE}} + \text{Higher-order-terms} + \text{Const}.\qedhere
\end{align*}   
\textbf{Remark}: A proof similar to theorem \ref{TCR bound} will also give a bound that relates M-MAE and U-MAE.

\end{proof}

\section{Experiments}
\label{sec:experiments}

In this section, we empirically evaluate our Matrix Variational Masked Auto-Encoder (M-MAE) with TCR loss, placing special emphasis on its performance in comparison to the U-MAE model with Square uniformity loss as a baseline. This experiment aims to shed light on the benefits that matrix information-theoretic tools can bring to methods based on masked image modeling\footnote{The code is available at \url{https://github.com/yifanzhang-pro/M-MAE}.}.

\subsection{Experimental setup}

\textbf{Datasets: ImageNet-1K.} We utilize the ImageNet-1K dataset~\citep{deng2009imagenet}, which is one of the most comprehensive datasets for image classification. It contains over 1 million images spread across 1000 classes and in self-supervised learning experiments the labels are dropped, providing a robust platform for evaluating our method's generalization capabilities. 

\textbf{Model architectures.} We adopt Vision Transformers (ViT) such as ViT-Base and ViT-Large for our models. We closely follow the precedent settings by the U-MAE~\citep{zhang2022mask} paper, as Theorem \ref{thm:subsume} shows the closeness of this method to our M-MAE loss. 

\textbf{Hyperparameters.} For a fair comparison, we adopt U-MAE's original hyperparameters: a mask ratio of 0.75 and a uniformity term coefficient $\lambda$ of 0.01 by default. Both models are pre-trained for 200 epochs on ImageNet-1K with a batch size of 1024, and weight decay is similarly configured as 0.05 to ensure parity in the experimental conditions. For ViT-Base, we set the TCR coefficients $\mu = 1$, and for ViT-Large, we set $\mu = 3$.

\subsection{Evaluation results}

\textbf{Evaluation metrics.} From Table~\ref{table:fintune-results}, it's evident that the M-MAE loss outperforms both MAE and U-MAE in terms of linear evaluation and fine-tuning accuracy. Specifically, for ViT-Base, M-MAE achieves a linear probing accuracy of 62.4\%, which is a substantial improvement over MAE's 55.4\% and U-MAE's 58.5\%. Similarly, in the context of ViT-Large, M-MAE achieves an accuracy of 66.0\%, again surpassing both MAE and U-MAE. In terms of fine-tuning performance, M-MAE also exhibits superiority, achieving 83.1\% and 84.3\% accuracy for ViT-Base and ViT-Large respectively. Notably, a 1\% increase in accuracy at ViT-Large is very significant. These results empirically validate the theoretical advantages of incorporating matrix information-theoretic tools into masked image modeling, as encapsulated by the TCR loss term in the M-MAE loss function.

\begin{table}[ht]
\centering
\caption{Linear evaluation accuracy (\%) and fine-tuning accuracy (\%) of pretrained models by MAE loss, U-MAE loss, and M-MAE loss with different ViT backbones on ImageNet-1K. The uniformity regularizer TCR loss in the M-MAE loss significantly improves the linear evaluation performance and fine-tuning performance of the MAE loss.}
\vspace{1ex}
\begin{tabular}{llcccc}
\toprule
Downstream Task                                  & Method        & ViT-Base       & ViT-Large      \\ \midrule
\multirow{2}{*}{Linear Probing} & MAE                           & 55.4           &62.2                \\
                                           & U-MAE           & \underline{58.5}           &    \underline{65.8}            \\ 
                                           & M-MAE   & \textbf{62.4}           &    \textbf{66.0}            \\ 
                                           \midrule
\multirow{2}{*}{Fine-tuning}          & MAE          &    82.9            &      \underline{83.3}          \\
                                           & U-MAE           &       \underline{83.0}         &      83.2          \\ 
                                           & M-MAE    & \textbf{83.1}           &    \textbf{84.3}            \\ 
                                           \bottomrule
\end{tabular}
\vspace{-3.7mm}
\label{table:fintune-results}
\end{table}



\section{Conclusion}

In conclusion, this study delves into self-supervised learning (SSL), examining contrastive, feature decorrelation-based learning, and masked image modeling through the lens of matrix information theory. Our exploration reveals that many SSL methods are maximizing matrix information-theoretic quantities like matrix mutual information and matrix joint entropy on Siamese architectures. 

Motivated by the theoretical findings, we also introduce a novel method, the matrix variational masked auto-encoder (M-MAE), enhancing masked image modeling by adding matrix-based estimators for entropy. Empirical results show the effectiveness of the introduced M-MAE loss.

\section*{Acknowledgment}
Yang Yuan is supported by the Ministry of Science and Technology of the People's Republic of China, the 2030 Innovation Megaprojects ``Program on New Generation Artificial Intelligence'' (Grant No. 2021AAA0150000). 

Weiran Huang is supported by the 2023 CCF-Baidu Open Fund and Microsoft Research Asia. 

We would also like to express our sincere gratitude to the reviewers of ICML 2024 for their insightful and constructive feedback. Their valuable comments have greatly contributed to improving the quality of our work.

\section*{Impact Statement}
This paper presents work whose goal is to advance the field of Machine Learning. There are many potential societal consequences of our work, none which we feel must be specifically highlighted here.



\bibliography{reference}
\bibliographystyle{icml2024}


\clearpage
\appendix
\onecolumn
\section{Appendix for proofs}
\label{sec:proofs}

\begin{proposition} 
$\operatorname{I}_2(\mathbf{K}_1; \mathbf{K}_2) = 2\log d - \log \frac{|| \mathbf{K}_1 ||^2_F || \mathbf{K}_2 ||^2_F}{|| \mathbf{K}_1 \odot \mathbf{K}_2 ||^2_F}$, where $d$ is the size of matrix $\mathbf{K}_1$.
\end{proposition}

\begin{proof}
The proof is straightforward by using the definition of matrix mutual information when $\alpha=2$ and the fact that when $\mathbf{K}$ is symmetric $\operatorname{tr}(\mathbf{K}^2)=\operatorname{tr}(\mathbf{K}^{\top} \mathbf{K})=\mid \mid \mathbf{K} \mid \mid^2_F$.    
\end{proof}

\begin{lemma} 
Suppose $\mathbf{a}$, $\mathbf{b}$, $\mathbf{a}'$ and $\mathbf{b}'$ are $l_2$ normalized, then $|\mathbf{a}^{\top} \mathbf{b}  |  \leq |\mathbf{a}^{\top} \mathbf{b}'  | + \| \mathbf{b} - \mathbf{b}'\| = |\mathbf{a}^{\top} \mathbf{b}'  | +  \sqrt{2(1 - \mathbf{b}^{\top} \mathbf{b}')}$.
\end{lemma}

\begin{proof}
Note $|\mathbf{a}^{\top} \mathbf{b}  | = |\mathbf{a}^{\top} \mathbf{b}' + \mathbf{a}^{\top} (\mathbf{b} - \mathbf{b}')  | \leq |\mathbf{a}^{\top} \mathbf{b}'  | + \| \mathbf{a} \| \| \mathbf{b} - \mathbf{b}'\| = |\mathbf{a}^{\top} \mathbf{b}'  | +  \sqrt{2(1 - \mathbf{b}^{\top} \mathbf{b}')}$.
\end{proof}

\begin{proposition} 
Suppose $\mathbf{K}_1$ and $\mathbf{K}_2$ are $d \times d$ positive semi-definite matrices with the constraint that each of its diagonals is $1$. Then $\operatorname{I}_{\alpha}(\mathbf{K}_1; \mathbf{K}_2) \leq \log d$.
\end{proposition}

\begin{proof}
The proof is straightforward by using the inequalities introduced in \cite{giraldo2014measures} as follows. $\operatorname{I}_{\alpha}(\mathbf{K}_1; \mathbf{K}_2) = \operatorname{H}_{\alpha}(\mathbf{K}_1) + \operatorname{H}_{\alpha}(\mathbf{K}_2) - \operatorname{H}_{\alpha}(\mathbf{K}_1 \odot \mathbf{K}_2) \leq \operatorname{H}_{\alpha}(\mathbf{K}_1) \leq  \log d$.   
\end{proof}

\begin{theorem} 
When $\alpha>0$, {Barlow Twins and spectral contrastive learning losses maximize the matrix mutual information when their loss value is $0$.} 
\end{theorem}

\begin{proof}

Denote the (batch normalized) vectors for each dimension $i$ ($1 \leq i \leq d$) of the online and target networks as $\bar{\mathbf{z}}^{(1)}_i$ and $\bar{\mathbf{z}}^{(2)}_i$. 

Take $\mathbf{K}_1 = [\bar {\mathbf{z}}^{(1)}_1 \cdots \bar{\mathbf{z}}^{(1)}_d]^{\top}[\bar {\mathbf{z}}^{(1)}_1 \cdots \bar{\mathbf{z}}^{(1)}_d]$ and $\mathbf{K}_2 = [\bar {\mathbf{z}}^{(2)}_1 \cdots \bar{\mathbf{z}}^{(2)}_d]^{\top}[\bar {\mathbf{z}}^{(2)}_1 \cdots \bar{\mathbf{z}}^{(2)}_d]$.

{When the loss value is 0}, Barlow Twins loss has $\bar{\mathbf{z}}^{(1)}_i = \bar{\mathbf{z}}^{(2)}_i$ for each $i \in \{1, \cdots, d\}$ and $(\bar{\mathbf{z}}^{(1)}_i)^{\top} \bar{\mathbf{z}}^{(2)}_j = 0$ for each $i \neq j$. Then for each $i \neq j$, $(\bar {\mathbf{z}}^{(1)}_i)^{\top} \bar {\mathbf{z}}^{(1)}_j = (\bar {\mathbf{z}}^{(1)}_i)^{\top} \bar {\mathbf{z}}^{(2)}_j = 0$. Similarly, $(\bar{\mathbf{z}}^{(2)}_i)^{\top} \bar {\mathbf{z}}^{(2)}_j = 0$. Then $\mathbf{K}_1 = \mathbf{K}_2 =\mathbf{I}_d$. By noticing $\operatorname{H}_{\alpha}(\mathbf{I}_d, \mathbf{I}_d) = \log d$. {Then the matrix mutual information is maximized.}

When performing spectral contrastive learning, the loss is $\sum^B_{i=1} \mid \mid \mathbf{z}^{(1)}_i - \mathbf{z}^{(2)}_i \mid \mid^{2}_2 + \lambda \sum_{ i \neq j} ((\mathbf{z}^{(1)}_i)^{\top} \mathbf{z}^{(2)}_j)^2$. {Take $\mathbf{K}_1 = \mathbf{Z}^{\top}_1\mathbf{Z}_1$ and $\mathbf{K}_2=\mathbf{Z}^{\top}_2\mathbf{Z}_2$,} the results follows similarly. Thus concludes the proof.
\end{proof}

\begin{proposition} 
The joint entropy lower bounds the representation rank in two branches by having the inequality as follows:
\begin{align*}
\operatorname{H}_1(\mathbf{K}_1, \mathbf{K}_2) 
\leq \log(\operatorname{rank}(\mathbf{K}_1 \odot \mathbf{K}_2))  \leq \log \operatorname{rank}(\mathbf{K}_1) + \log \operatorname{rank}(\mathbf{K}_2).  
\end{align*}
\begin{align*}
\max \{\operatorname{H}_{\alpha}(\mathbf{K}_1), \operatorname{H}_{\alpha}(\mathbf{K}_2) \} 
 \leq  \operatorname{H}_{\alpha}(\mathbf{K}_1, \mathbf{K}_2)  \leq \operatorname{H}_{\alpha}(\mathbf{K}_1) + \operatorname{H}_{\alpha}(\mathbf{K}_2).  
\end{align*}
\end{proposition}

\begin{proof}
The first inequality comes from the fact that effective rank lower bounds the rank. The second inequality comes from the rank inequality of Hadamard product. The third and fourth inequalities follow from \cite{giraldo2014measures}.  
\end{proof}

\begin{proposition} 
Suppose $\mathbf{K}$ is a $d \times d$ matrix with the constraint that each of its diagonals is $1$. Then the following equalities holds:
\begin{equation}
\begin{aligned}
&\operatorname{H}_1(\mathbf{K}) = \log d - \frac{1}{d} \operatorname{KL}(\mathbf{K}, \mathbf{I}_d),\\
&
\operatorname{TCR}_{\mu}(\mathbf{K}) = d \log(1+\mu) -\operatorname{KL}(\mathbf{I}_d, \frac{1}{1+\mu} (\mu \mathbf{I}_d + \mathbf{K})).
\end{aligned}
\end{equation}
\end{proposition}

\begin{proof}
The proof is from directly using the definition of matrix KL divergence.    
\end{proof}

\begin{proposition} 
The (joint) total coding rate upperbounds the rate in two branches by having the inequality as follows:
\begin{equation}
\operatorname{TCR}_{\mu^2+2 \mu}(\mathbf{K}_1 \odot \mathbf{K}_2) \geq \operatorname{TCR}_{\mu}(\mathbf{K}_1) + \operatorname{TCR}_{\mu}(\mathbf{K}_2).   
\end{equation}
\end{proposition}

\begin{proof}
The inequality comes from the determinant inequality of Hadamard products and the fact that $(\mathbf{K}_1 +\mu \mathbf{I}) \odot (\mathbf{K}_1 +\mu \mathbf{I}) = \mathbf{K}_1 \odot \mathbf{K}_2 + (\mu^2+2\mu)\mathbf{I}$.       
\end{proof}

\begin{theorem}
1. In the spectral contrastive loss, we have 
\begin{equation*}
\operatorname{H}_2(\mathbf{Z}^{\top}_1\mathbf{Z}_1, \mathbf{Z}^{\top}_2\mathbf{Z}_2) \geq \log B -  \log(1+ (2 + \frac{2}{B \lambda}) \mathcal{L}_{SC} ).
\end{equation*}

2. In the Barlow Twins loss, we have 
\begin{equation*}
\operatorname{H}_2(\bar{\mathbf{Z}}_1\bar{\mathbf{Z}}^{\top}_1, \bar{\mathbf{Z}}_2\bar{\mathbf{Z}}^{\top}_2) \geq \log d -  \log(1+ \frac{2}{d \lambda} \mathcal{L}_{BT} + 4 \sqrt{d\mathcal{L}_{BT}} ).
\end{equation*}

\end{theorem}

\begin{proof}
We will only present the proof for spectral contrastive loss as Barlow Twins loss is similar.

By Proposition \ref{joint entropy reduction}, we know that $\operatorname{H}_2(\mathbf{Z}^{\top}_1\mathbf{Z}_1, \mathbf{Z}^{\top}_2\mathbf{Z}_2) = 2\log B - \log {|| \mathbf{Z}^{\top}_1\mathbf{Z}_1 \odot \mathbf{Z}^{\top}_2\mathbf{Z}_2 ||^2_F} = \log B - \log \frac{|| \mathbf{Z}^{\top}_1\mathbf{Z}_1 \odot \mathbf{Z}^{\top}_2\mathbf{Z}_2 ||^2_F}{B}.$

On the other hand, $\frac{|| \mathbf{Z}^{\top}_1\mathbf{Z}_1 \odot \mathbf{Z}^{\top}_2\mathbf{Z}_2 ||^2_F}{B} = 1 + \frac{\sum_{i \neq j} ((\mathbf{z}^{(1)}_i)^{\top}  \mathbf{z}^{(1)}_j)^2 (\mathbf{z}^{(2)}_i)^{\top}  \mathbf{z}^{(2)}_j)^2 }{B}.$

Using Lemma \ref{cross and cov}, we know that $((\mathbf{z}^{(1)}_i)^{\top}  \mathbf{z}^{(1)}_j)^2 \leq (|(\mathbf{z}^{(1)}_i)^{\top}  \mathbf{z}^{(2)}_j| +  \| \mathbf{z}^{(1)}_j - \mathbf{z}^{(2)}_j \| )^2 \leq 2(|(\mathbf{z}^{(1)}_i)^{\top}  \mathbf{z}^{(2)}_j|^2 +  \| \mathbf{z}^{(1)}_j - \mathbf{z}^{(2)}_j \|^2).$

Therefore, $\sum_{i \neq j} ((\mathbf{z}^{(1)}_i)^{\top}  \mathbf{z}^{(1)}_j)^2 (\mathbf{z}^{(2)}_i)^{\top}  \mathbf{z}^{(2)}_j)^2 \leq  \sum_{i \neq j} ((\mathbf{z}^{(1)}_i)^{\top}  \mathbf{z}^{(1)}_j)^2 \leq 2(\frac{1}{\lambda} + B) \mathcal{L}_{SC}$.

Combining all the above, the conclusion follows.

\end{proof}

\begin{proposition} 
\begin{equation}
\mathbf{I}_d = \operatorname{argmax} \operatorname{H}_{\alpha}(\mathbf{K}), \operatorname{and} 
\mathbf{I}_d = \operatorname{argmax} 
\operatorname{TCR}_{\mu}(\mathbf{K}),
\end{equation} where the maximization is over $d \times d$ positive semi-definite matrices with the constraint that each of its diagonals is $1$.
\end{proposition}
\begin{proof}
Specifically, matrix entropy is Shannon entropy on the spectrum and the uniform distribution on the spectrum maximizes the entropy. Consider the spectrum will also give the result for TCR. Another proof directly using matrix KL divergence can be seen in \citep{zhang2023kernel}.
\end{proof}

\begin{theorem} 
When $\alpha>0$, {Barlow twins and spectral contrastive learning losses maximize the matrix joint entropy when their loss value is $0$.} 
\end{theorem}

\begin{proof}
Denote the (along batch normalized) vectors for each dimension $i$ ($1 \leq i \leq d$) of the online and target networks as $\bar{\mathbf{z}}^{(1)}_i$ and $\bar{\mathbf{z}}^{(2)}_i$. Take $\mathbf{K}_1 = [\bar{\mathbf{z}}^{(1)}_1 \cdots \bar{\mathbf{z}}^{(1)}_d]^{\top}[\bar {\mathbf{z}}^{(1)}_1 \cdots \bar {\mathbf{z}}^{(1)}_d]$ and $\mathbf{K}_2 = [\bar{\mathbf{z}}^{(2)}_1 \cdots \bar{\mathbf{z}}^{(2)}_d]^{\top}[\bar {\mathbf{z}}^{(2)}_1 \cdots \bar {\mathbf{z}}^{(2)}_d]$. {When the loss value is 0,} Barlow Twins loss has $\bar{\mathbf{z}}^{(1)}_i = \bar{\mathbf{z}}^{(2)}_i$ for each $i \in \{1, \cdots, d\}$ and $(\bar{\mathbf{z}}^{(1)}_i)^{\top} \bar{\mathbf{z}}^{(2)}_j = 0$ for each $i \neq j$. Then for each $i \neq j$, $(\bar {\mathbf{z}}^{(1)}_i)^{\top} \bar {\mathbf{z}}^{(1)}_j = (\bar {\mathbf{z}}^{(1)}_i)^{\top} \bar {\mathbf{z}}^{(2)}_j = 0$. Similarly, $(\bar{\mathbf{z}}^{(2)}_i)^{\top} \bar {\mathbf{z}}^{(2)}_j = 0$. Then $\mathbf{K}_1 = \mathbf{K}_2 =\mathbf{I}_d$. By noticing $\mathbf{I}_d \odot \mathbf{I}_d = \mathbf{I}_d$. {Then the matrix joint entropy is maximized by noticing Proposition \ref{entropy optimal point}.}

When performing spectral contrastive learning, the loss is $\sum^B_{i=1} \mid \mid \mathbf{z}^{(1)}_i - \mathbf{z}^{(2)}_i \mid \mid^{2}_2 + \lambda \sum_{ i \neq j} ((\mathbf{z}^{(1)}_i)^{\top} \mathbf{z}^{(2)}_j)^2$. {Take $\mathbf{K}_1 = \mathbf{Z}^{\top}_1\mathbf{Z}_1$ and $\mathbf{K}_2=\mathbf{Z}^{\top}_2\mathbf{Z}_2$}, the results follows similarly. 
\end{proof}

\begin{theorem}
$\operatorname{TCR}_{\mu}(\bar{\mathbf{Z}}_1\bar{\mathbf{Z}}^{\top}_1\odot \bar{\mathbf{Z}}_2\bar{\mathbf{Z}}^{\top}_2) \geq d \log(\mu +1) - \frac{1}{2 \mu^2} ( \frac{2}{\lambda} \mathcal{L}_{BT} + 4(d-1) \sqrt{d\mathcal{L}_{BT}} ).$   
\end{theorem}
\begin{proof}
\begin{lemma}
$\forall$ $x,a \geq 0$, we have $\log(1+x) \geq \log(1+a) -\frac{1}{2}(x-a)^2+\frac{1}{1+a}(x-a)$.
\end{lemma}
\begin{proof}
The proof of the lemma is direct from taking the derivative and finding that $x=a$ is the minimal point.
\end{proof}
Denote $\lambda_i$ as the eigenvalues of $\bar{\mathbf{Z}}_1\bar{\mathbf{Z}}^{\top}_1\odot \bar{\mathbf{Z}}_2\bar{\mathbf{Z}}^{\top}_2$, we know that $\lambda_i \geq 0$ and $\sum^d_{i=1} \lambda_i = d$ and $\sum^d_{i=1} \lambda^2_i = \| \bar{\mathbf{Z}}_1\bar{\mathbf{Z}}^{\top}_1\odot \bar{\mathbf{Z}}_2\bar{\mathbf{Z}}^{\top}_2\|^2_F$. Then take $a=\frac{1}{\mu}$ in the above lemma, we will get the following: $\operatorname{TCR}_{\mu}(\bar{\mathbf{Z}}_1\bar{\mathbf{Z}}^{\top}_1\odot \bar{\mathbf{Z}}_2\bar{\mathbf{Z}}^{\top}_2) = \log \operatorname{det}(\mu \mathbf{I}_d + \bar{\mathbf{Z}}_1\bar{\mathbf{Z}}^{\top}_1\odot \bar{\mathbf{Z}}_2\bar{\mathbf{Z}}^{\top}_2) = \sum^d_{i=1} \log(\mu + \lambda_i) = d \log(\mu) + \sum^d_{i=1} \log(1 + \frac{\lambda_i}{\mu}) \geq d \log(\mu) + \sum^d_{i=1}(\log(1 + \frac{1}{\mu}) + \frac{1}{1+\frac{1}{\mu}}(\frac{\lambda_i}{\mu} - \frac{1}{\mu}) - \frac{1}{2} (\frac{\lambda_i}{\mu} - \frac{1}{\mu})^2) = d \log(\mu +1) + \frac{1}{2\mu^2}d - \frac{1}{2 \mu^2} \sum^d_{i=1} \lambda^2_i = d \log(\mu +1) + \frac{1}{2\mu^2}d - \frac{1}{2 \mu^2} \| \bar{\mathbf{Z}}_1\bar{\mathbf{Z}}^{\top}_1\odot \bar{\mathbf{Z}}_2\bar{\mathbf{Z}}^{\top}_2  \|^2_F$. If we denote $\mathbf{K}_1=\bar{\mathbf{Z}}_1\bar{\mathbf{Z}}^{\top}_1$ and $\mathbf{K}_2=\bar{\mathbf{Z}}_2\bar{\mathbf{Z}}^{\top}_2$, then $\operatorname{TCR}_{\mu} (\mathbf{K}_1 \odot \mathbf{K}_2) \geq d \log(\mu +1) - \frac{1}{2\mu^2} \sum_{i \neq j} \mathbf{K}^2_1(i,j)\mathbf{K}^2_2(i,j) \geq d \log(\mu +1) - \frac{1}{2\mu^2} \sum_{i \neq j} \mathbf{K}^2_1(i,j) \geq d \log(\mu +1) - \frac{1}{2\mu^2} \sum_{i \neq j} 2(\mathcal{C}^2_{i,j} + (2-2\mathcal{C}_{j,j})) \geq d \log(\mu +1) -\frac{1}{2 \mu^2} (\frac{2}{\lambda} \mathcal{L}_{BT} + 4(d-1) \sqrt{d \mathcal{L}_{BT}})$.
\end{proof}

{\textbf{Remark}: Following the proof of our Theorem \ref{MI max} and Theorem \ref{thm:joint-entropy 1} and Theorem \ref{TCR bound}, our theoretical results can be generalized to sample contrastive and dimension contrastive methods defined in \citep{garrido2022duality}. As pointed out by \citep{garrido2022duality}, sample and dimension contrastive methods contain many famous self-supervised methods (Proposition 3.2 of \citep{garrido2022duality}).}

Below are other ways of proving the case of $\alpha=2$.

\begin{theorem} 
When $\alpha=2$, {Barlow Twins and spectral contrastive learning losses maximize the matrix mutual information when their loss value is $0$.} 
\end{theorem}

We shall first present a lemma as follows:
\begin{lemma} \label{reduction lemma}
Given two positive integers $n,m$. Denote two sequences $\mathbf{x}=(x_1, \cdots, x_m)$ and $\mathbf{y}=(y_1, \cdots, y_m)$. Then $\mathbf{x}=\mathbf{y}=0$ is the unique solution to the following optimization problem: 
\begin{equation*}
\operatorname{min}_{0 \leq \mathbf{x} \leq 1, 0 \leq \mathbf{y} \leq 1} \frac{(n + \sum^{m}_{i=1} x_i)(n + \sum^{m}_{i=1} y_i)}{n + \sum^{m}_{i=1} x_i y_i}.   
\end{equation*}
\end{lemma}

\begin{proof}
\label{proof:reduction}
Notice that 
$$
\frac{(n + \sum^{m}_{i=1} x_i)(n + \sum^{m}_{i=1} y_i)}{n + \sum^{m}_{i=1} x_i y_i} - n = \frac{n(\sum^{m}_{i=1}x_i + \sum^{m}_{i=1} y_i)-n \sum^{m}_{i=1} x_i y_i + (\sum^{m}_{i=1} x_i)(\sum^{m}_{i=1} y_i)}{n + \sum^{m}_{i=1} x_i y_i}
$$

Note $x_i \geq x^2_i$ and $y_i \geq y^2_i$. Then we shall get inequality as follows:
\begin{equation*}
\sum^{m}_{i=1} x_i + \sum^{m}_{i=1} y_i \geq 2 \sqrt{(\sum^{m}_{i=1} x_i)(\sum^{m}_{i=1} y_i)}  \geq 2 \sqrt{(\sum^{m}_{i=1} x^2_i)(\sum^{m}_{i=1} y^2_i)} \geq 2  \sum^{m}_{i=1} x_i y_i. 
\end{equation*}
Thus the above optimization problem gets a minimum of $n$, with $\mathbf{x}=\mathbf{y}=0$ the unique solution.
\end{proof}

\begin{proof}

Denote the (batch normalized) vectors for each dimension $i$ ($1 \leq i \leq d$) of the online and target networks as $\bar{\mathbf{z}}^{(1)}_i$ and $\bar{\mathbf{z}}^{(2)}_i$. 

Take $\mathbf{K}_1 = [\bar {\mathbf{z}}^{(1)}_1 \cdots \bar{\mathbf{z}}^{(1)}_d]^{\top}[\bar {\mathbf{z}}^{(1)}_1 \cdots \bar{\mathbf{z}}^{(1)}_d]$ and $\mathbf{K}_2 = [\bar {\mathbf{z}}^{(2)}_1 \cdots \bar{\mathbf{z}}^{(2)}_d]^{\top}[\bar {\mathbf{z}}^{(2)}_1 \cdots \bar{\mathbf{z}}^{(2)}_d]$. 

From Proposition \ref{mutual information reduction}, it is clear that the mutual information $\mathbf{I}_2(\mathbf{K}_1; \mathbf{K}_2)$ is maximized iff $\frac{|| \mathbf{K}_1 ||^2_F || \mathbf{K}_2 ||^2_F}{|| \mathbf{K}_1 \odot \mathbf{K}_2 ||^2_F}$ is minimized. Take $((\bar{\mathbf{z}}^{(1)}_i)^{\top}\bar{\mathbf{z}}^{(1)}_j)^2$ and $((\bar{\mathbf{z}}^{(2)}_i)^{\top}\bar{ \mathbf{z}}^{(2)}_j)^2$ as elements of $\mathbf{x}$ and $\mathbf{y}$ in Lemma \ref{reduction lemma}, then we can see the maximal mutual information is attained iff $(\bar{\mathbf{z}}^{(1)}_i)^{\top} \bar {\mathbf{z}}^{(1)}_j = 0$ and $(\bar{\mathbf{z}}^{(2)}_i)^{\top} \bar {\mathbf{z}}^{(2)}_j = 0$. 

{When the loss value is 0}, Barlow Twins loss has $\bar{\mathbf{z}}^{(1)}_i = \bar{\mathbf{z}}^{(2)}_i$ for each $i \in \{1, \cdots, d\}$ and $(\bar{\mathbf{z}}^{(1)}_i)^{\top} \bar{\mathbf{z}}^{(2)}_j = 0$ for each $i \neq j$. Then for each $i \neq j$, $(\bar {\mathbf{z}}^{(1)}_i)^{\top} \bar {\mathbf{z}}^{(1)}_j = (\bar {\mathbf{z}}^{(1)}_i)^{\top} \bar {\mathbf{z}}^{(2)}_j = 0$. Similarly, $(\bar{\mathbf{z}}^{(2)}_i)^{\top} \bar {\mathbf{z}}^{(2)}_j = 0$. {Then the matrix mutual information is maximized.}

When performing spectral contrastive learning, the loss is $\sum^B_{i=1} \mid \mid \mathbf{z}^{(1)}_i - \mathbf{z}^{(2)}_i \mid \mid^{2}_2 + \lambda \sum_{ i \neq j} ((\mathbf{z}^{(1)}_i)^{\top} \mathbf{z}^{(2)}_j)^2$. {Take $\mathbf{K}_1 = \mathbf{Z}^{\top}_1\mathbf{Z}_1$ and $\mathbf{K}_2=\mathbf{Z}^{\top}_2\mathbf{Z}_2$,} the results follows similarly. Thus concludes the proof.
\end{proof}

\begin{theorem} 
When $\alpha=2$, {Barlow Twins and spectral contrastive learning losses maximize the matrix joint entropy when their loss value is $0$.} 
\end{theorem}

\begin{proof}
\label{proof:joint-entropy}
Denote the (along batch normalized) vectors for each dimension $i$ ($1 \leq i \leq d$) of the online and target networks as $\bar{\mathbf{z}}^{(1)}_i$ and $\bar{\mathbf{z}}^{(2)}_i$. Take $\mathbf{K}_1 = [\bar{\mathbf{z}}^{(1)}_1 \cdots \bar{\mathbf{z}}^{(1)}_d]^{\top}[\bar {\mathbf{z}}^{(1)}_1 \cdots \bar {\mathbf{z}}^{(1)}_d]$ and $\mathbf{K}_2 = [\bar{\mathbf{z}}^{(2)}_1 \cdots \bar{\mathbf{z}}^{(2)}_d]^{\top}[\bar {\mathbf{z}}^{(2)}_1 \cdots \bar {\mathbf{z}}^{(2)}_d]$. From Proposition \ref{joint entropy reduction}, it is clear that the joint entropy $\operatorname{H}_2(\mathbf{K}_1, \mathbf{K}_2)$ is maximized iff $\mid \mid \mathbf{K}_1 \odot \mathbf{K}_2 \mid \mid^2_F$ is minimized. Note from the definition of Frobenius norm, $\mid \mid \mathbf{K}_1 \odot \mathbf{K}_2 \mid \mid^2_F= \sum_{i,j} ((\mathbf{K}_1 \odot \mathbf{K}_2)(i, j))^2 = \sum_{i,j} (\mathbf{K}_1(i, j) \mathbf{K}_2(i, j))^2$. {When the loss value is 0,} Barlow Twins loss has $\bar{\mathbf{z}}^{(1)}_i = \bar{\mathbf{z}}^{(2)}_i$ for each $i \in \{1, \cdots, d\}$ and $(\bar{\mathbf{z}}^{(1)}_i)^{\top} \bar{\mathbf{z}}^{(2)}_j = 0$ for each $i \neq j$. Then for each $i \neq j$, $(\bar {\mathbf{z}}^{(1)}_i)^{\top} \bar {\mathbf{z}}^{(1)}_j = (\bar {\mathbf{z}}^{(1)}_i)^{\top} \bar {\mathbf{z}}^{(2)}_j = 0$. Similarly, $(\bar{\mathbf{z}}^{(2)}_i)^{\top} \bar {\mathbf{z}}^{(2)}_j = 0$.
When performing spectral contrastive learning, the loss is $\sum^B_{i=1} \mid \mid \mathbf{z}^{(1)}_i - \mathbf{z}^{(2)}_i \mid \mid^{2}_2 + \lambda \sum_{ i \neq j} ((\mathbf{z}^{(1)}_i)^{\top} \mathbf{z}^{(2)}_j)^2$. {Take $\mathbf{K}_1 = \mathbf{Z}^{\top}_1\mathbf{Z}_1$ and $\mathbf{K}_2=\mathbf{Z}^{\top}_2\mathbf{Z}_2$}, the results follows similarly. 
\end{proof}

\section{Ablation study}

\begin{table}[htb]
\centering
\caption{Linear probing accuracy (\%) of M-MAE for ViT-Base with varying \( \mu \) coefficients.}
\vspace{1ex}
\begin{tabular}{cccccccc}
\toprule
\( \mu \) Coefficient & 0.1 & 0.5 & 0.75 & 1 & 1.25 & 1.5 & 3 \\
\midrule
Accuracy & 58.61 & 59.38 & 59.87 & \textbf{62.40} & 59.54 & 57.76 & 50.46 \\
\bottomrule
\end{tabular}
\vspace{-3.2mm}
\label{table:ablation}
\end{table}

To investigate the robustness of our approach to variations in hyperparameters, we perform an ablation study focusing on the coefficients $\mu$ in the TCR loss. The results for different $\mu$ values are summarized as in Table~\ref{table:ablation}.

As observed in Table~\ref{table:ablation}, the M-MAE model exhibits a peak performance at $\mu = 1$ for ViT-Base. Deviating from this value leads to a gradual degradation in performance, illustrating the importance of careful hyperparameter tuning for maximizing the benefits of the TCR loss.

\section{More experiments}\label{more exp}

\subsection{The tendency under different temperatures}

One of the important hyper-parameters in SimCLR is the temperature in InfoNCE loss. We plot the matrix mutual information and matrix joint entropy during the pretraining of SimCLR on CIFAR-10 with different temperatures. We set temperatures as 0.3, 0.5, 0.7. From the Figure \ref{matrix information and temp}, we can observe that the increase of matrix mutual information or matrix joint entropy during training ties closely with the final accuracy. As temperature = 0.3 outperforms 0.5 and 0.7 in KNN accuracy, it also has the biggest matrix mutual information and matrix joint entropy value.

\begin{figure}[htb]
\centering
\begin{subfigure}[b]{0.49\textwidth}
\includegraphics[width=\linewidth]{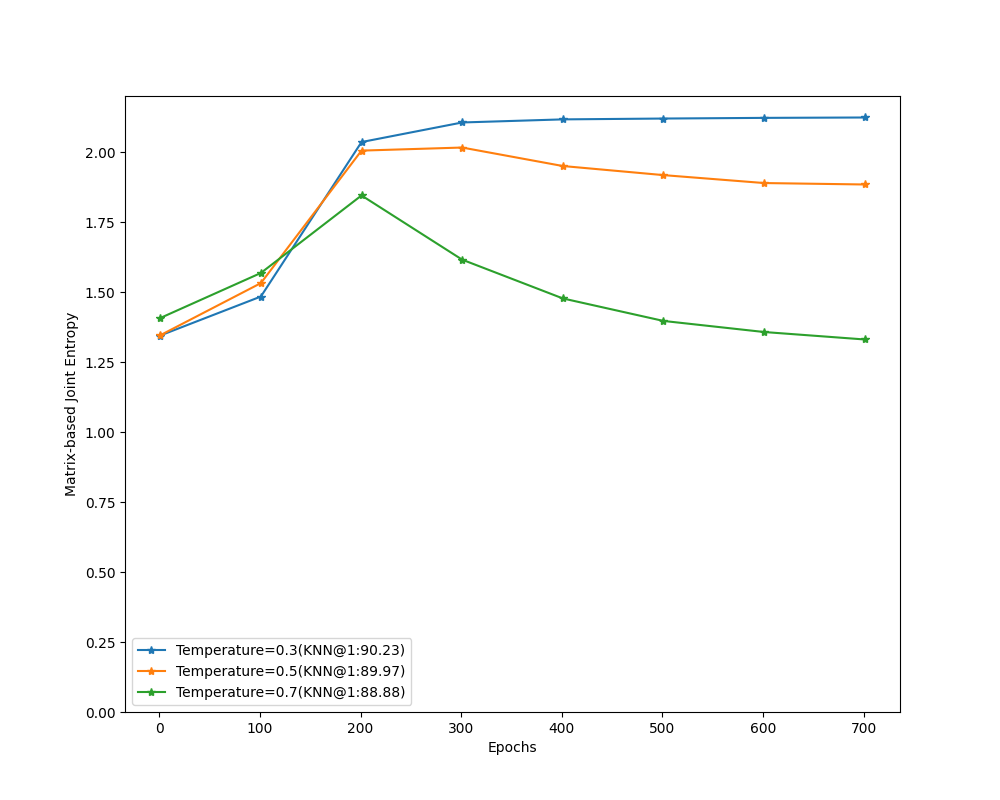}
\caption{Matrix joint entropy. }
\label{matrix joint entropy and temp}
\end{subfigure}
\begin{subfigure}[b]{0.49\textwidth}
\includegraphics[width=\linewidth]{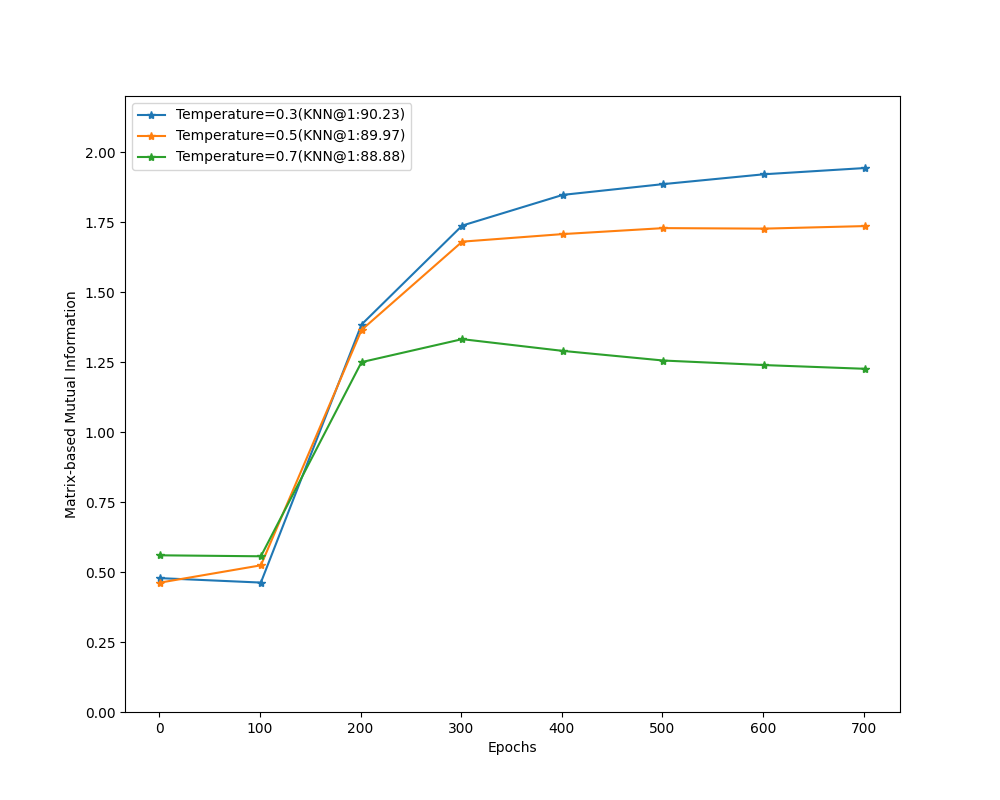}
\caption{Matrix mutual information }
\label{matrix mutual information and temp}
\end{subfigure}
\caption{Tendency of matrix information quantities under different temperatures. The experiments are conducted on CIFAR-10 using SimCLR.}
\label{matrix information and temp}
\end{figure}

\subsection{Longer training on masked modeling}

We have conducted experiments on CIFAR-100. The hyper-parameters are similar to U-MAE, and we set $\mu=1$ and pretrain CIFAR-100 for 1000 epochs. M-MAE may use hyperparameters that are not identical to U-MAE to fully reflect its potential. However, due to time constraints, we were unable to extensively search for these hyperparameters. We believe that with more reasonable hyperparameters, M-MAE can achieve even better results. As shown in the Table \ref{tab:my_table}, our method performs remarkably well, even without an exhaustive hyperparameter search.

\begin{table}[ht]
\centering
\caption{Results on CIFAR-100 under various masked modeling pretraining algorithms.}
\label{tab:my_table}
\begin{tabular}{|c|c|c|c|c|}
\hline
Method & linear probe@1 &	linear probe@5 & finetune@1 & finetune@5 \\
\hline
M-MAE (vit-base) & \textbf{60.9} &	\textbf{88.1}&	83.8&	\textbf{97.0} \\
\hline
U-MAE (vit-base) & 59.7&	86.8&	84.1&	96.5\\
\hline
MAE (vit-base) & 59.2&	86.8&	\textbf{84.5}&	96.6 \\
\hline
\end{tabular}
\end{table}

We plot the effective rank of learned representations under algorithms MAE, U-MAE, and M-MAE in Figure \ref{fig:my_image}. We find that M-MAE has the biggest effective rank among the algorithms U-MAE has its effective rank bigger than MAE, and the effective ranks of M-MAE have an increasing trend during training. This aligns with our theorem which shows U-MAE can be seen as a second-order approximation of our M-MAE method.

\begin{figure}[ht]
\centering
\includegraphics[width=0.5\textwidth]{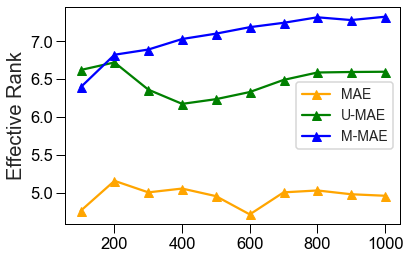}
\caption{The effective rank during pre-training.}
\label{fig:my_image}
\end{figure}

\subsection{Measuring the difference between Siamese branches}

As we have discussed the total or shared information in the Siamese architectures, we haven't used the matrix information-theoretic tools to analyze the \textbf{differences} in the two branches. 

From information theory, we know that KL divergence is a special case of $f$-divergence defined as follows:
\begin{definition}
For two probability distributions $\mathbf{P}$ and $\mathbf{Q}$, where $\mathbf{P}$ is absolutely continuous with respect to $\mathbf{Q}$. Suppose $\mathbf{P}$ and $\mathbf{Q}$ has density $p(x)$ and $q(x)$ respectively. Then for a convex function $f$ is defined on non-negative numbers which is right-continuous at $0$ and satisfies $f(1)=0$. The $f$-divergence is defined as: 
\begin{equation}
D_f(\mathbf{P} \mid \mid \mathbf{Q}) = \int f\left(\frac{p(x)}{q(x)}\right) q(x) dx.    
\end{equation}    
\end{definition}
When $f(x)=x \log x$ will recover the KL divergence. Then a natural question arises: are there other $f$ divergences that can be easily generalized to matrices? Note by taking $f(x)= -(x+1) \log \frac{x+1}{2} + x \log x$, we shall retrieve JS divergence. Recently, \cite{hoyos2023representation} generalized JS divergence to the matrix regime.
\begin{definition}[Matrix JS divergence \citep{hoyos2023representation}] Suppose matrix $\mathbf{K}_1, \mathbf{K}_2 \in \mathbb{R}^{n \times n}$ which $\mathbf{K}_1(i,i)=\mathbf{K}_2(i,i)=1$ for every $i=1, \cdots, n$. The  Jensen-Shannon (JS) divergence between these two matrices $\mathbf{K}_1$ and $\mathbf{K}_2$ is defined as
\begin{equation*}
\operatorname{JS} \left(\mathbf{K}_1 \mid \mid \mathbf{K}_2 \right)= \mathbf{H}_1\left (\frac{\mathbf{K}_1 + \mathbf{K}_2}{2}\right ) - \frac{\mathbf{H}_1(\mathbf{K}_1) + \mathbf{H}_1(\mathbf{K}_2)}{2}.
\end{equation*}
\end{definition}

One may think the matrix KL divergence is a good candidate, but this quantity has some severe problems making it not a good choice. One problem is that the matrix KL divergence is not symmetric. Another problem is that the matrix KL divergence is not bounded, and sometimes may even be undefined. Recall these drawbacks are similar to that of KL divergence in traditional information theory. In traditional information theory, JS divergence successfully overcomes these drawbacks, thus we may use the matrix JS divergence to measure the differences between branches. As matrix JS divergence considers the interactions between branches, we shall also include the JS divergence between eigenspace distributions as another difference measure.

Specifically, the online and target batch normalized feature correlation matrices can be calculated by $\mathbf{K}_1=\bar{\mathbf{Z}}_1\bar{\mathbf{Z}}^{\top}_1$ and $\mathbf{K}_2=\bar{\mathbf{Z}}_2\bar{\mathbf{Z}}^{\top}_2$. Denote $\mathbf{p}_1$ and $\mathbf{p}_2$ the online and target (normalized) eigen distribution respectively. We plot the matrix JS divergence $\operatorname{JS}(\mathbf{K_1}, \mathbf{K_2})$ between branches in Figure~\ref{fig:MJS}. It is evident that throughout the whole training, the JS divergence is a small value, indicating a small gap between the branches. More interestingly, the JS divergence increases during training, which means that an effect of ``symmetry-breaking" may exist in self-supervised learning. Additionally, we plot the plain JS divergence $\operatorname{JS}(\mathbf{p_1}, \mathbf{p_2})$ between branches in Figure~\ref{fig:PJS}. It is evident that $\operatorname{JS}(\mathbf{p_1}, \mathbf{p_2})$ is very small, even compared to $\operatorname{JS}(\mathbf{K_1}, \mathbf{K_2})$. Thus we hypothesize that the ``symmetry-breaking" phenomenon is mainly due to the interactions between Siamese branches.

\begin{figure}[htb]
\centering
\begin{subfigure}[b]{0.45\textwidth}
\includegraphics[width=\linewidth]{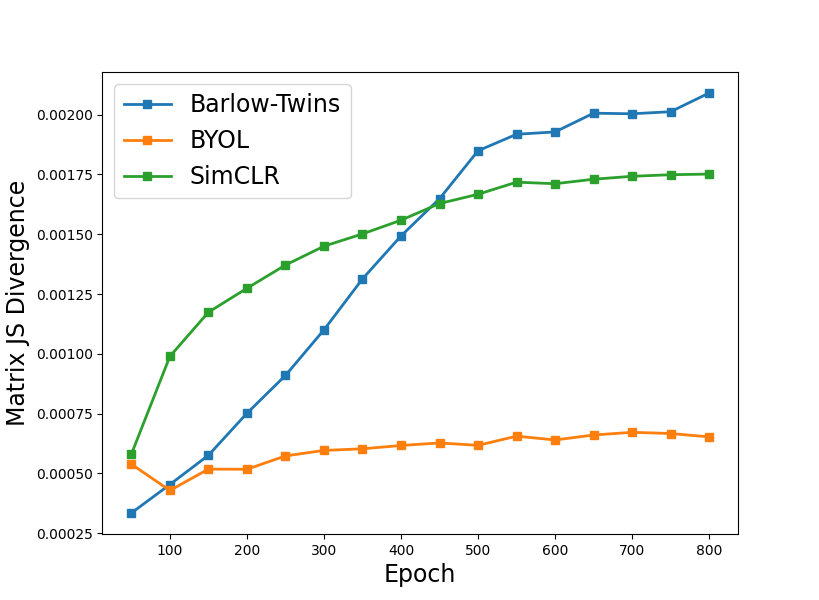}
\caption{Matrix JS Divergence.}
\label{fig:MJS}
\end{subfigure}
\begin{subfigure}[b]{0.45\textwidth}
\includegraphics[width=\linewidth]{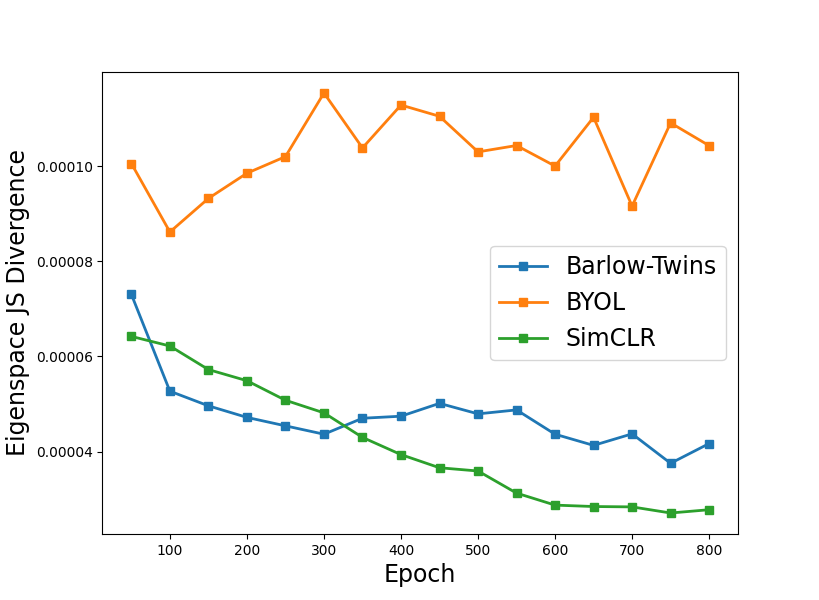}
\caption{Eigenspace JS Divergence.}
\label{fig:PJS}
\end{subfigure}
\caption{Visualization of matrix JS divergence and eigenspace JS divergence on CIFAR10 for Barlow-Twins, BYOL, and SimCLR.}
\label{JS Divergence}
\end{figure}

\end{document}